\newcommand{\R}{\mathbb{R}}
\newtheorem{theorem}{Theorem}
\newtheorem{lemma}{Lemma}
\newtheorem{corollary}{Corollary}
\newtheorem{fact}{Fact}
\newtheorem{remark}{Remark}
\theoremstyle{definition}
\newtheorem{definition}{Definition}
\newcommand{\e}{\epsilon}
\newcommand{\poly}{\text{poly}}
\newcommand{\supp}{\text{supp}}
\newcommand{\mathbbm}[1]{\text{\usefont{U}{bbm}{m}{n}#1}}
\DeclareMathOperator*{\argmax}{arg\,max}
\newenvironment{customdef}[1]
  {\innercustomdef}
  {\endinnercustomdef}
\begin{document}


%

%

\twocolumn[

\aistatstitle{Constructing a provably adversarially-robust classifier from a high accuracy one}

\aistatsauthor{ Grzegorz Głuch \And Rüdiger Urbanke }

\aistatsaddress{ EPFL \And EPFL }
]

\begin{abstract}
Modern machine learning models with very high accuracy have been shown to be vulnerable to small, adversarially chosen perturbations of the input. Given black-box access to a high-accuracy classifier $f$, we show how to construct a new classifier $g$ that has high accuracy and is also robust to adversarial $\ell_2$-bounded perturbations. Our algorithm builds upon the framework of \textit{randomized smoothing} that has been recently shown to outperform all previous defenses against $\ell_2$-bounded adversaries. Using techniques like random partitions and doubling dimension, we are able to bound the adversarial error of $g$ in terms of the optimum error. In this paper we focus on our conceptual contribution, but we do present two examples to illustrate our framework. We will argue that, under some assumptions, our bounds are optimal for these cases.
\end{abstract}


\section{INTRODUCTION}


Modern neural networks achieve high accuracy on tasks such as image classification (\cite{howtobecomefamous}) or speech recognition (\cite{speachrecognition}) but have been shown to be susceptible to small, adversarially-chosen perturbations of the inputs (\cite{intriguingprop}, \cite{neuralnetseasilyfooled}, \cite{evassionattackontesttime}): given an input $x$, which is correctly classified by a neural network, one is often able to find a small perturbation $\delta$ such that $x + \delta$ is misclassified by the neural network, whereas $x$ and $x+\delta$ are virtually indistinguishable to the human eye. 

Many empirical approaches have been proposed for building ``robust'' classifiers. One of the most successful ones is the framework of \textit{adversarial training} (\cite{adversarialtrainginggoodfellow}, \cite{adversarialtraingingkurakin}, \cite{adversarialtraingngmadry}). Unfortunately these techniques usually protect only against restricted types of adversaries. Moreover, many of the heuristic defenses were shown to break in the presence of suitably powerful adversaries (\cite{bypassing}, \cite{obfuscatedGradient}, \cite{breakingDefenses}).

\textit{Certifiable robust} classifiers, on the other hand, are classifiers whose predictions are verifiably constant within a neighborhood of a query point. The first such classifiers were introduced by  \cite{certifiableRobust1} and \cite{certifiableRobust2}. \textit{Randomized smoothing} was considered in (\cite{Lacuyer19}, \cite{secondOrderAttack}, \cite{smoothingPearson} and \cite{smoothingMicrosoft}). This approach works as follows.

Let $f$ be any classifier which maps $\mathbb{R}^d$ to classes $\mathcal{Y}$. The smoothed classifier $g$ classifies an input $x$ as that class $c$ that is most likely to be returned by $f$ on input $x + \delta$, where $\delta \sim \mathcal{N}(0,\sigma^2 I)$.

It was shown in \cite{Lacuyer19} that this approach scales well and one can use it to train certifiably robust classifier for ImageNet. In \cite{smoothingPearson} it is shown that for $\ell_2$ perturbations \textit{randomized smoothing} outperforms other certifiably defenses previously proposed. Moreover the authors show how to derive a robustness radius guarantee for an input $x$. To derive the bound one defines for a class $c \in \mathcal{Y}$ the probability $p_c := \mathbb{P}_\delta(f(x + \delta) = c)$, where the perturbation $\delta$ is chosen according to $\delta \sim \mathcal{N}(0,\sigma^2 I)$. Then one argues that if there exists a $c$ such that $p_c \gg \max_{c' \neq c} p_{c'}$ then the robustness radius at $x$ is big. Unfortunately, even if the base classifier $f$ has very high accuracy we don't know much about the structure of $\{p_c\}_{c \in \mathcal{Y}}$. Thus it's hard to reason about the robustness radii. 
These shortcomings point to the following question:

\textit{Having a black-box access to a high accuracy classifier $f$ is it possible to construct a new classifier $g$ that is guaranteed to be both robust and has high accuracy?} 

Note that robustness without an accuracy constraint is trivially achieved by a constant classifier, and high accuracy without a robustness constraints has also been shown to be achievable in many settings of interest. The real question of interest therefore only appears if we require both types of constraints.

\textbf{Our contributions:} 
We show a framework for transforming \textbf{any} high accuracy classifier $f$ into a provably robust and high accuracy classifier $g$. Moreover we show what the optimal classifier for a given learning task is and then relate the performance of $g$ to this optimum. We present two instances of this framework. In order to keep the exposition simple, we limit our setting to $\ell_2$-robustness. The ideas apply more generally, but the details will differ.

In the first instance we show that if $f$ satisfies a suitable property (similar to a property implicitly assumed in \cite{smoothingPearson} and \cite{smoothingMicrosoft}) then $g$ can be evaluated with only black-box access to $f$.

In the second instance we prove that, without any assumptions on $f$, a robust classifier $g$ can be evaluated if besides $f$ we also have access to an oracle $\mathcal{O}$ that provides  {\em unlabeled} i.i.d. samples from the underlying distribution. Notice that this model is not very restrictive. A similar setting occurs in semi-supervised learning where the learner has access to a dataset of labeled data $D_l$ and also to (an often much larger) dataset $D_u$ of unlabeled samples (see \cite{semisupervisedbook}). In this scenario $D_u$ serves as the oracle $\mathcal{O}$.

Even though our main contribution is a conceptual one, 
we also present two implementations of these methods that achieve different runtime/robustness tradeoffs. In the end we give examples of binary classification tasks (e.g. adversarial spheres from \cite{adversarialSpheres}) and compare the performance of our methods on these tasks to the optimum.

\section{OUR TECHNIQUES}

Let us present an overview of our approach.

\subsection{Randomized smoothing}

Our techniques build upon \textit{randomized smoothing} from \cite{Lacuyer19}, \cite{secondOrderAttack}, \cite{smoothingPearson} and \cite{smoothingMicrosoft}. Consider a classifier $f$ that maps $\mathbb{R}^d$ to classes $\mathcal{Y}$. \textit{Randomized smoothing} is a method that produces a new, \textit{smoothed} classifier $g$. The smoothed classifier $g$ assigns to a query point $x$ the class that is most likely to be returned by $f$ under random Gaussian noise perturbations:
\begin{equation}\label{eq:regularsmoothing}
g(x) := \argmax_{c \in \mathcal{Y}} \mathbb{P}[f(x+\delta) = c], \delta \sim \mathcal{N}(0,\sigma^2 I).
\end{equation}
Note that $g$ can also be expressed as:
\begin{equation}\label{eq:regularsmoothingdifformulation}
g(x) = \argmax_{c \in \mathcal{Y}} \int_{\mathbb{R}^d}\mathbbm{1}_{ \{f(x) = c\} } \gamma(x-z) dz \text{,}
\end{equation}
where $\gamma$ is the density function of $\mathcal{N}(0,\sigma^2 I)$.

Unfortunately it is easy to design a learning task and a classifier $f$ with low standard error such that $g$, computed according to \eqref{eq:regularsmoothing}, has high error. For instance, imagine the following binary classification task in $\mathbb{R}^2$. 
We generate $x \in \mathbb{R}^2$ uniformly at random from a union of two discs $B_{-},B_{+}$ of radius $1$ centered at $(-2,0)$ and $(2,0)$, respectively. We assign the label $y = -1$ if $x$ belongs to $B_{-}$ and the label $y = +1$ otherwise. Let $f(x) = -1$ if $x \in B_{-}$ and $f(x) = +1$ otherwise (i.e., for all points $x \not \in B_{-}$). Observe that $f$ has a standard risk of $0$. If we now compute $g$ according to \eqref{eq:regularsmoothing}, then  $g(x) = +1$ for all $x$ if $\sigma \geq 1/(\sqrt{2}\; \text{InvErfc}(\tfrac12)) \sim 1.4826$. This means that $g$ has an error of $\tfrac12$. 

The reason that we were able to construct such an example is that in \eqref{eq:regularsmoothing} the smoothing is performed independent of the data. A natural idea to fix this is to perform the smoothing ``conditioned" on the data distribution. For instance, in the example above we would like to not take points outside $B_{-} \cup B_{+}$ into account during smoothing. The formal definition of this approach is as follows:
\begin{equation}\label{eq:restrictedsmoothing}
g(x) := \argmax_{c \in \mathcal{Y}} \int_{\mathbb{R}^d}\mathbbm{1}_{ \{f(x) = c\} } \gamma(x-z)  p_X(z) dz,
\end{equation}
where $p_X$ is the density function of the data distribution. Notice the difference between \eqref{eq:restrictedsmoothing} and \eqref{eq:regularsmoothingdifformulation}. Unfortunately we can construct ``counter examples" even for this modification as shown in the next section.

\subsection{Hard distribution for randomized smoothing described in \eqref{eq:restrictedsmoothing}}

It is possible to create a separable, binary classification task on $\mathbb{R}^d$ and a classifier $f$ such that the standard error of $f$ is $e^{-\Theta(d)}$ but the error of the smoothed classifier $g$ is $\Theta(1)$ (see Appendix A 
for details). That is, the standard error grows by a factor $e^{\Theta(d)}$ when we perform smoothing!  

This example shows that when we use \textit{randomized smoothing} then already the {\em standard} error can grow by a factor exponential in the dimension of the ambient space. As we aim for creating $g$ that is robust \textbf{and} has small error we must try a different approach.

\subsection{Partitions}

The intuitive reason why we were able to construct the example in the previous section is that in \textit{randomized smoothing} it might happen that $1$ misclassified point of $f$ contributes to $e^{\Theta(d)}$ misclassified points of $g$. To prevent that we use space partitions.

Assume that in a binary classification task the distance between the two classes is at least $\e$. Assume further that we partition $\mathbb{R}^d$ into sets $S_1, S_2, \dots$, each of diameter at most $\e$. Now for $x \in \mathbb{R}^d$  we define $g(x)$ as the class that is most likely returned by $f$ {\bf on points sampled from the data distribution conditioned on being in set $S_i$ to which $x$ belongs}. As the classes are at least $\e$ away from each other and the diameters of the sets in the partition are at most $\e$, each misclassified point of $f$ contributes to at most $2$ misclassified points of $g$ (this will be proven in Lemma~\ref{lem:boundriskofg}). This means that the error of $g$ is bounded in terms of the error of $f$.

But we also want $g$ to be robust. Intuitively we want a big fraction of points to be far from the boundaries of sets $S_1,S_2, \dots$. To do that we use \textit{padded random partitions}, which previously found applications in low distortion embeddings (\cite{spacePartitioning}), locality sensitive hashing (\cite{aproximatenearestneighborinhighdim}) and even spectral algorithms (\cite{multiwaycheeger}). Definitions of random and padded partitions are presented in Section~\ref{sec:randompartitions}.

\subsection{Doubling dimension}

Some random partitions suffer from the big dimension of the space $\mathbb{R}^d$. To improve the guarantees for binary classification tasks that have data lying on lower dimensional manifolds we resort to the notion of \textit{doubling dimension} (Section~\ref{sec:doublingdimension}). This definition captures the intuition that it should be easier to ``describe" a manifold that is lower dimensional.

\subsection{Examples}

In Section~\ref{sec:experiments} we give two examples to analyze the tightness of the bounds obtained in Section~\ref{sec:mainresults}. The first one is a data distribution from \cite{adversarialSpheres}. For this example we show that our approach is competitive against a certain class of classifiers (see Section~\ref{subsec:advspheres} for an in-depth discussion). 
The second example is a data distribution supported on two low dimensional manifolds embedded in high-dimensional space for which we show optimality of our method up to constant factors. 

\section{PRELIMINARIES}\label{sec:preliminaries}

For a distribution $\mathcal{D}$ over $\mathbb{R}^d$ and for a set $A \subseteq \mathbb{R}^d$ let 
$\mu(A) := \mathbb{P}_X(X \in A) \text{.}$ For us, ${\mathcal D}$ will denote the distribution of the data.  For simplicity in this section and the rest of the paper we consider only {\em separable} binary classification tasks. Such tasks are fully specified by ${\mathcal D}$ as well as a {\em ground truth} $h : \R^d \rightarrow \{-1, 1\}$. We note however that one can generalize the results to any binary classification task (see Appendix B 
for a generalization of the definitions from this section). For $x \in \mathbb{R}^d$ and $\epsilon>0$ we write $B_{\e}(x)$ to denote the {\em open ball} with center $x$ and radius $\e$. Most of the proofs are deferred to the Appendix D.

\begin{definition}(\textbf{Risk})
Consider a binary classification task for separable classes with a ground truth $h : 
\mathbb{R}^d \xrightarrow{} \{-1, 1\}$. For a classifier $f : \mathbb{R}^d \xrightarrow{} \{-1, 1\}$ we define the \textbf{R}isk as  
$$R(f) := \mathbb{P}_X (f(X) \neq h(X)) \text{.}$$
\end{definition}

\begin{definition}(\textbf{Adversarial Risk}) \label{def:adversarialrisk}
Consider a binary classification task for separable classes with a ground truth $h : 
\mathbb{R}^d \xrightarrow{} \{-1, 1\}$. For a classifier $f : \mathbb{R}^d \xrightarrow{} \{-1, 1\}$ and $\epsilon \in \mathbb{R}_{\geq 0}$ we define the \textbf{A}dversarial \textbf{R}isk as
$$AR(f,\epsilon) := \mathbb{P}_X (\exists \ \eta \in B_\epsilon \ f(X + \eta) \neq h(X)) \text{.}$$
We also introduce the notation:
$$AR(\epsilon) := \inf_{f} AR(f,\epsilon) \text{.}$$
to denote the smallest achievable adversarial risk for that classification task with a given $\epsilon$.
\end{definition}

\begin{fact}(\textbf{R versus AR}) \label{fact:basicsaboutrisks}
\vspace{-0.3cm}
\begin{itemize}
    \item $AR(f,0) = R(f)$,
    \item $AR(f,\epsilon)$ and $AR(\epsilon)$ are nondecreasing functions of $\epsilon$; combined with the previous point this in particular implies that for $\epsilon \in \mathbb{R}_{\geq 0}$, $AR(f,\epsilon) \geq R(f)$,
    \item $AR(f,\e) \!\leq \!R(f) \!+\! \mathbb{P}_{X \sim \mathcal{D}}[f \text{ $\neg$ const. on } B_\e(X)]$.
\end{itemize}
\end{fact}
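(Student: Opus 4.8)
The plan is to establish the three items by elementary, pointwise arguments followed by taking expectations; nothing beyond basic properties of balls and of indicator functions is needed. For the first item, at $\epsilon=0$ the only admissible perturbation is $\eta=0$ (we read $B_0$ as $\{0\}$, consistently with $B_0(x)=\{x\}$), so the event ``$\exists\ \eta\in B_0\ f(X+\eta)\neq h(X)$'' is literally ``$f(X)\neq h(X)$'', whence $AR(f,0)=\mathbb{P}_X(f(X)\neq h(X))=R(f)$. For the monotonicity item, if $\epsilon_1\le\epsilon_2$ then $B_{\epsilon_1}\subseteq B_{\epsilon_2}$, so for every fixed $X$ the event ``$\exists\ \eta\in B_{\epsilon_1}\ f(X+\eta)\neq h(X)$'' is contained in the corresponding event for $\epsilon_2$; taking probabilities gives $AR(f,\epsilon_1)\le AR(f,\epsilon_2)$, and since this holds for every $f$ it also gives $AR(\epsilon_1)=\inf_f AR(f,\epsilon_1)\le\inf_f AR(f,\epsilon_2)=AR(\epsilon_2)$. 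Combining this with the first item, $AR(f,\epsilon)\ge AR(f,0)=R(f)$ for all $\epsilon\ge 0$.

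For the third item I would argue at a fixed point $x$ and split on whether $f$ is constant on $B_\epsilon(x)$. If it is, then since $x\in B_\epsilon(x)$ we get $f(x+\eta)=f(x)$ for every $\eta\in B_\epsilon$, so the adversarial event at $x$ coincides with $\{f(x)\neq h(x)\}$; if $f$ is not constant on $B_\epsilon(x)$ we simply bound the adversarial indicator by $1$. In both cases
$$\mathbbm{1}_{\{\exists\ \eta\in B_\epsilon\ f(x+\eta)\neq h(x)\}}\ \le\ \mathbbm{1}_{\{f(x)\neq h(x)\}}+\mathbbm{1}_{\{f\text{ not const.\ on }B_\epsilon(x)\}}\,,$$
and taking the expectation over $X\sim\mathcal{D}$ yields precisely $AR(f,\epsilon)\le R(f)+\mathbb{P}_{X\sim\mathcal{D}}[f\text{ not const.\ on }B_\epsilon(X)]$.

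There is no genuine obstacle here; the content is just the bookkeeping of these three pointwise inequalities. The only point that deserves a moment's care is the boundary behavior, i.e. the fact $x\in B_\epsilon(x)$ (equivalently $0\in B_\epsilon$): with the open-ball convention this holds for $\epsilon>0$, and at $\epsilon=0$ it is handled by the convention $B_0=\{0\}$, so that the first item really returns $R(f)$ and not a vacuous $0$. Measurability of the sets involved is assumed implicitly, as elsewhere in the paper.
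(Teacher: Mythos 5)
Your proof is correct and is essentially the only natural way to verify these three items; the paper itself leaves this Fact unproved (as it does for several other ``Fact'' statements), so there is no authorial argument to compare against, but your pointwise decomposition followed by taking expectations is exactly what a careful reader would supply. The one place where you rightly flag a subtlety — the paper's convention that $B_\epsilon(x)$ is an open ball, which would make $B_0$ empty and force $AR(f,0)=0$ rather than $R(f)$ — is handled cleanly by your explicit convention $B_0=\{0\}$, and your observation that this same convention makes the third item degenerate correctly to $AR(f,0)\le R(f)$ when $\epsilon=0$ shows the three bullets are mutually consistent. No gaps.
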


\begin{definition}[\textbf{Separation function}]
For a binary classification task for separable classes with a ground truth $h : 
\mathbb{R}^d \xrightarrow{} \{-1, 1\}$ we define a separation function as follows:
$$S(\epsilon) := \inf_{E \subseteq \mathbb{R}^d, d(M_{-} \setminus E, M_{+} \setminus E) \geq \epsilon} \left[ \mathbb{P}_X(X \in E) \right] \text{.}$$
Here $M_{-} = h^{-1}(\{-1\}),M_{+} = h^{-1}(\{1\})$. For a given $\epsilon>0$ this function returns the probability mass that needs to be removed so that the classes are separated by an $\epsilon$-margin.
\end{definition}

\begin{restatable}{lemma}{bestclassifier}
\label{lem:bestclassifier}
For all separable binary classification tasks and all $\epsilon \in \mathbb{R}_{\geq 0}$ we have that:
$$AR(\epsilon) = S(2\epsilon) \text{.}$$
\end{restatable}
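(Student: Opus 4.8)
The plan is to prove the two inequalities $S(2\epsilon) \le AR(\epsilon)$ and $AR(\epsilon) \le S(2\epsilon)$ separately; the case $\epsilon = 0$ is immediate (both sides vanish, taking $f = h$ on one side and $E = \emptyset$ on the other), so I would assume $\epsilon > 0$. Throughout I would write $M_- = h^{-1}(\{-1\})$, $M_+ = h^{-1}(\{1\})$, and for a classifier $f$ introduce the ``error set'' $\mathrm{Bad}(f) := \{x \in \R^d : \exists\, \eta \in B_\epsilon,\ f(x+\eta) \neq h(x)\}$, so that $AR(f,\epsilon) = \mu(\mathrm{Bad}(f))$ directly from Definition~\ref{def:adversarialrisk}. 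The whole argument is then a dictionary between ``a classifier $f$ that is $\epsilon$-robustly correct outside $\mathrm{Bad}(f)$'' and ``a removed set $E$ leaving the two classes $2\epsilon$-separated''.

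For $S(2\epsilon) \le AR(\epsilon)$, I would fix an arbitrary classifier $f$ and take $E := \mathrm{Bad}(f)$, so $\mu(E) = AR(f,\epsilon)$. It then suffices to verify that $E$ is feasible for the infimum defining $S(2\epsilon)$, i.e. $d(M_-\setminus E,\, M_+\setminus E) \ge 2\epsilon$. If $x \in M_-\setminus E$, then $h(x) = -1$ and $x \notin \mathrm{Bad}(f)$ together force $f \equiv -1$ on the open ball $B_\epsilon(x)$; symmetrically $y \in M_+\setminus E$ forces $f \equiv +1$ on $B_\epsilon(y)$. Were $\|x - y\| < 2\epsilon$, the midpoint $m = \tfrac12(x + y)$ would lie in $B_\epsilon(x) \cap B_\epsilon(y)$ and hence would need $f(m) = -1$ and $f(m) = +1$ at once, a contradiction. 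So $\|x-y\| \ge 2\epsilon$ for every such pair, giving $S(2\epsilon) \le \mu(E) = AR(f,\epsilon)$; taking the infimum over $f$ yields the inequality.

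For $AR(\epsilon) \le S(2\epsilon)$, I would fix an arbitrary feasible $E$, set $A := M_-\setminus E$, $B := M_+\setminus E$ with $d(A,B) \ge 2\epsilon$, and define the ``nearest-region'' classifier $f(z) := -1$ if $d(z,A) \le d(z,B)$ and $f(z) := +1$ otherwise (with $d(z,\emptyset) := +\infty$). For $x \in A$ and $\eta \in B_\epsilon$, the triangle inequality gives $d(x+\eta, A) \le \|\eta\| < \epsilon$ and $d(x+\eta, B) \ge d(x,B) - \|\eta\| > 2\epsilon - \epsilon = \epsilon$, so $f(x+\eta) = -1 = h(x)$; the symmetric estimate handles $x \in B$. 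Since $A \cup B = (M_-\cup M_+)\setminus E = \R^d\setminus E$, this means $\mathrm{Bad}(f)$ is disjoint from $A \cup B$, i.e. $\mathrm{Bad}(f) \subseteq E$, so $AR(\epsilon) \le AR(f,\epsilon) = \mu(\mathrm{Bad}(f)) \le \mu(E)$. Taking the infimum over feasible $E$ completes the proof; the degenerate cases $A = \emptyset$ or $B = \emptyset$ need no separate treatment, since the separation constraint is then vacuous and the same formulas apply.

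I do not expect a serious obstacle; the one point to get right is the factor of $2$, which comes from the observation that requiring $f$ to be correct on all of $B_\epsilon(x)$ for every correctly-classified $x$ is equivalent to requiring a $2\epsilon$-margin between the correctly-classified parts of $M_-$ and $M_+$ --- the forward half of this equivalence being the midpoint argument and the reverse half the triangle-inequality estimate above. The one routine technicality I would either absorb into the paper's standing regularity assumptions or leave implicit, as is standard here, is the measurability of $\mathrm{Bad}(f)$ and of the sets $E$ constructed above.
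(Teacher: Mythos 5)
Your proof is correct and follows essentially the same dictionary between robust classifiers and $2\epsilon$-separators that the paper uses: the bad set $\mathrm{Bad}(f)$ gives a feasible $E$, and a feasible $E$ gives a classifier that is robustly correct off $E$. The only cosmetic difference is in the construction for $AR(\epsilon) \le S(2\epsilon)$: the paper uses the $\epsilon$-neighborhood classifier $f(z) = -1 \iff d(z, M_-\setminus E) \le \epsilon$, while you use the nearest-region classifier $f(z) = -1 \iff d(z,A) \le d(z,B)$; both verify correctness by the same triangle-inequality estimate. Your explicit midpoint argument for the separation $d(M_-\setminus E, M_+\setminus E) \ge 2\epsilon$ is a welcome spelling-out of a step the paper leaves as an assertion, and your use of arbitrary feasible $E$ and a closing infimum sidesteps the paper's implicit assumption that the infimum in $S(2\epsilon)$ is attained by a "minimizer set."
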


{\color{blue}

}

\section{DOUBLING DIMENSION}\label{sec:doublingdimension}

\begin{definition}(\textbf{$\epsilon$-Net})
Let $(M,d)$ be a metric space. For $N \subseteq M$ we say that $N$ is an $\e$-net of $M$ if it satisfies:
\begin{itemize}
    \item For every $u,w \in N$ if $u \neq w$ then $d(u,w) \geq \e$,
    \item $M \subseteq \bigcup_{u \in N} B_{\e}(u)$.
\end{itemize}
\end{definition}

\begin{definition}[\textbf{$\e$-Doubling dimension}]
For a metric space $(M,d)$, let $\lambda$ be the smallest value such that every ball of radius at most $\e$ in $M$ can be covered by $\lambda$ balls of half the radius. We define the \textbf{$\e$-doubling dimension} of $M$ as $dd((M,d),\e) := \log_2 \lambda$.
Sometimes we will omit specifying the metric and write $dd(M,\e)$ when the metric is clear from the context.
\end{definition}

\begin{definition}[\textbf{Doubling dimension}]
For a metric space $(M,d)$ it's doubling dimension is defined as:
$$dd((M,d)) := \sup_{\e > 0} dd((M,d),\e)$$
\end{definition}

\begin{restatable}{fact}{doublingdimofrd}
\label{fact:ddofrd}
$dd((\mathbb{R}^d, \ell_2)) \leq 3d$
\end{restatable}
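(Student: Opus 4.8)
The plan is to bound the $\e$-doubling dimension uniformly in $\e$; since the $\ell_2$ metric on $\mathbb{R}^d$ is scale-invariant, it actually suffices to consider a single radius, say the unit ball, and by translation invariance we may center it at the origin. So the core task is: cover $B_1(0)$ by at most $2^{3d} = 8^d$ balls of radius $1/2$. First I would fix a maximal $\tfrac12$-separated subset $N$ of the closed ball $\overline{B_1(0)}$, i.e. an inclusion-maximal set of points pairwise at distance $\ge \tfrac12$. Maximality immediately gives that the balls $\{B_{1/2}(u)\}_{u \in N}$ cover $B_1(0)$: any point not covered could be added to $N$, contradicting maximality. (One has to be slightly careful with open versus closed balls here, but replacing $\tfrac12$ by a radius slightly larger than $\tfrac12$, or working with closed balls throughout and then noting the definition only needs "covered by $\lambda$ balls of half the radius," handles this; I would just use radius $\tfrac12$ and closed balls and observe it still fits the definition, or pass to $\tfrac12 + $ tiny slack.)

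Next I would bound $|N|$ by a volume-packing argument. Since the points of $N$ are pairwise $\ge \tfrac12$ apart, the open balls $\{B_{1/4}(u)\}_{u \in N}$ are pairwise disjoint. Each such ball has center within distance $1$ of the origin, hence is contained in $B_{1 + 1/4}(0) = B_{5/4}(0)$. Comparing Lebesgue volumes: $|N| \cdot \mathrm{vol}(B_{1/4}) \le \mathrm{vol}(B_{5/4})$, and since volume of a $d$-ball scales as radius$^d$, this gives $|N| \le (5/4 \div 1/4)^d = 5^d$. Therefore $\lambda \le 5^d$, so $dd((\mathbb{R}^d,\ell_2), \e) = \log_2 \lambda \le d \log_2 5 < d \cdot 2.322 < 3d$, and taking the supremum over $\e$ preserves the bound.

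The main obstacle — really the only subtle point — is the open/closed ball bookkeeping in the definition of $\e$-net and doubling dimension, since the paper defines $B_\e(x)$ as the open ball and the covering condition uses these open balls. A maximal $\tfrac12$-separated set covers the ball with closed balls of radius $\tfrac12$ but possibly not with open ones (a boundary point could be at distance exactly $\tfrac12$ from its nearest net point). I would resolve this cleanly by choosing the separation parameter to be $\tfrac12 + \eta$ for an arbitrarily small $\eta > 0$: a maximal $(\tfrac12+\eta)$-separated set then covers $B_1(0)$ by \emph{open} balls of radius $\tfrac12 + \eta$ — and since we only need to cover a ball of radius $1$ by balls of radius $\tfrac12$, and $\tfrac12 + \eta$ balls are even larger, a further trivial scaling/containment argument (or simply redoing the packing bound with $\tfrac12+\eta$ and $\tfrac14 + \tfrac\eta2$, then letting $\eta \to 0$) still yields $\lambda \le 5^d$. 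Everything else is the standard metric-packing estimate, and the constant $3d$ is deliberately loose ($\log_2 5 \approx 2.32$), leaving ample room for this slack.
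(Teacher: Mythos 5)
Your argument matches the paper's proof exactly: take an $\e/2$-net of a ball of radius $\e$, pack pairwise disjoint balls of radius $\e/4$ inside $B_{5\e/4}$, bound the net's cardinality by $5^d$ via volume comparison, and conclude $dd \leq \log_2 5^d < 3d$. One small remark: the open-versus-closed ball concern you flag is actually a non-issue, since a maximal $\tfrac12$-separated subset already covers with \emph{open} balls of radius $\tfrac12$ --- if some $x$ had distance $\geq \tfrac12$ to every net point, $x$ could be adjoined while preserving $\tfrac12$-separation, contradicting maximality --- so no $\eta$-slack is needed.
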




The next fact was implicitly proven in \cite{curvatureBound}.

\begin{fact}\label{fact:aboutcurvature}
Let $M \subseteq \mathbb{R}^d$ be a $d'$ dimensional manifold such that the second fundamental form is uniformly bounded by $\kappa$. 
Pick $\e \leq 1/2\kappa$. If for all $e' \leq \e$, for all $x \in M$ we have that $B_{\e}(x) \cap M$ has at most $2^{O(d')}$ connected components then $$dd(M,\e) = O(d') \text{,}$$
where the metric on $M$ is the inherited $\ell_2$ metric from $\mathbb{R}^d$.
\end{fact}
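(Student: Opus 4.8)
The plan is to compare the intrinsic geometry of $M$ near a point with the geometry of its tangent space $\mathbb{R}^{d'}$, using the curvature bound to control the distortion, and then invoke Fact 2 ($dd((\mathbb{R}^{d'},\ell_2)) \le 3d'$) transported through that comparison. Concretely, fix $x \in M$ and a radius $e' \le \e \le 1/(2\kappa)$. I would first show that, because the second fundamental form is bounded by $\kappa$, the manifold stays close to its tangent plane on this scale: a ball $B_{e'}(x) \cap M$ (intersected with the single connected component through $x$, say) is bi-Lipschitz equivalent, with distortion constant $C = C(\kappa \e) = 1 + O(\kappa \e) = O(1)$, to a Euclidean ball of comparable radius in $\mathbb{R}^{d'}$ via the orthogonal projection onto $T_xM$ (or the exponential map). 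This is the standard fact that bounded curvature implies the extrinsic and intrinsic metrics agree up to a constant factor at scales below $1/\kappa$; it is exactly the content borrowed from \cite{curvatureBound}.

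Next I would use this comparison to do the covering count. Given a ball $B_{e'}(x) \cap M$, the hypothesis says it has at most $2^{O(d')}$ connected components; handle each component separately, so it suffices to cover one component. For a single component, push it forward to $\mathbb{R}^{d'}$ by the bi-Lipschitz map, where it sits inside a Euclidean ball of radius $\le C e'$. By Fact 2, that Euclidean ball can be covered by $2^{3d' \lceil \log_2(2C) \rceil} = 2^{O(d')}$ balls of radius $e'/(2C)$ — i.e. iterating the doubling property $O(1)$ times to go from radius $Ce'$ down to $e'/(2C)$. Pulling these back through the bi-Lipschitz map, each becomes a set of diameter $\le e'/2$ inside $M$, hence is contained in an $M$-ball of radius $e'/2$. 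Multiplying the per-component count $2^{O(d')}$ by the number of components $2^{O(d')}$ gives a cover of $B_{e'}(x) \cap M$ by $2^{O(d')}$ balls of radius $e'/2$. Since $x$ and $e' \le \e$ were arbitrary, $dd(M,\e) = O(d')$, as claimed.

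The main obstacle is making the first step precise and uniform: one needs that the bi-Lipschitz constant between the extrinsic $\ell_2$ metric on $B_{e'}(x)\cap M$ and the model Euclidean metric depends only on $\kappa \e$ (not on $x$, not on $d$, not on the component), and that this holds for the \emph{inherited extrinsic} metric rather than the intrinsic geodesic metric — a subtlety, since two nearby-in-$\mathbb{R}^d$ points on different sheets of $M$ can be far in geodesic distance, which is precisely why the connected-components hypothesis is needed. I would lean on \cite{curvatureBound} for the quantitative tubular-neighborhood / projection estimate and just cite it, noting that the constant absorbed into $O(d')$ also swallows the $O(1)$ number of doubling iterations. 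A minor point to check is that $B_{e'}(x)\cap M$ really has at most $2^{O(d')}$ components for \emph{every} $e' \le \e$ — but this is given by hypothesis, so no work is needed there.
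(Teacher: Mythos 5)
The paper does not actually prove this fact --- it only asserts that the statement ``was implicitly proven in'' \cite{curvatureBound} and cites that reference as a black box, so there is no in-paper proof to compare your sketch against. Taking your sketch on its own terms, it is the natural argument and structurally sound: bound the number of components of $B_{e'}(x)\cap M$ by the hypothesis; for each component establish a bi-Lipschitz map of $O(1)$ distortion onto a subset of $\mathbb{R}^{d'}$ (projection onto a tangent plane, valid at scales $e'\lesssim 1/\kappa$ because the second fundamental form is bounded by $\kappa$); transfer $dd(\mathbb{R}^{d'})\le 3d'$ through $O(1)$ doubling iterations; pull back; and multiply the per-component cover by the $2^{O(d')}$ components. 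You also correctly identify why the connected-component hypothesis is indispensable --- the tangent-plane estimate is per-sheet, while the \emph{inherited} $\ell_2$ metric can place different sheets arbitrarily close --- and you correctly read the intended $B_{e'}(x)\cap M$ where the paper's statement has a typo ($B_\e(x)$, which would make the quantifier over $e'$ vacuous).

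Two small points to tighten. For a component not containing $x$ you must project onto $T_yM$ for some $y$ in that component, not onto $T_xM$; and the substantive content you are importing from \cite{curvatureBound} is exactly that each such component is a graph over that tangent plane with distortion $1+O(\kappa e')$ (this is where ``has at most $2^{O(d')}$ components'' must be complemented by ``each component is a single sheet'' at scale $e'\le 1/(2\kappa)$). Also, your pullback bookkeeping is off by a constant: a Euclidean ball of radius $e'/(2C)$ has diameter $e'/C$, so its preimage under a $C$-bi-Lipschitz map has diameter up to $e'$, not $e'/2$. Use radius $e'/(4C)$ instead; that costs one extra doubling iteration, which is absorbed into the $O(1)$ iterations and leaves the $2^{O(d')}$ bound, and hence $dd(M,\e)=O(d')$, unaffected.
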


\begin{restatable}{lemma}{notmanypoints}
\label{lem:notmanypoints}
Let $(M,d)$ be a metric space with $\e$-doubling dimension $dd$. If all pairwise distances in $N \subseteq M$ are at least $r$ then for any point $x \in M$ and radius $r \leq t \leq \e$ we have $|B_t(x) \cap N| \leq 2^{dd \lceil \log \frac{2t}{r} \rceil}$.
\end{restatable}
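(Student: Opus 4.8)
The plan is to prove the bound by iterated application of the doubling property, exactly as in the standard packing argument in doubling metric spaces, while being careful that the doubling property is only assumed to hold at scales up to $\e$.

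\emph{Setup.} Fix $x \in M$ and a radius $t$ with $r \le t \le \e$. I want to bound $|B_t(x) \cap N|$, where $N$ has all pairwise distances at least $r$. The idea is to repeatedly halve the radius: starting from the ball $B_t(x)$, apply the $\e$-doubling property to cover it by $2^{dd}$ balls of radius $t/2$, then cover each of those by $2^{dd}$ balls of radius $t/4$, and so on. After $k$ steps, $B_t(x)$ is covered by at most $2^{dd \cdot k}$ balls of radius $t/2^k$. I choose $k := \lceil \log_2(2t/r) \rceil$, which guarantees $t/2^k \le r/2$. Since every ball of radius $\le r/2$ has diameter $\le r < $ the minimum pairwise distance in $N$, each such small ball contains at most one point of $N$. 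Hence $|B_t(x) \cap N|$ is at most the number of covering balls, i.e. $2^{dd \cdot k} = 2^{dd \lceil \log_2(2t/r) \rceil}$, which is the claimed bound.

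\emph{Verifying the halving step stays in range.} The one subtlety is that the $\e$-doubling dimension only tells us that balls of radius \emph{at most} $\e$ can be covered by $2^{dd}$ balls of half the radius. At every step of the recursion the radius we are halving is $t/2^j$ for $j = 0, 1, \dots, k-1$, and since $t \le \e$ we have $t/2^j \le t \le \e$ throughout, so the doubling property applies at every step. (Also, the covering balls of radius $t/2^{j+1}$ produced at step $j$ have centers in $M$; we may intersect them with $M$ and, if necessary, recenter, but this is routine since we only need them as covering sets.) I should also note the degenerate case $k = 0$, which occurs only if $2t/r \le 1$, i.e. $t \le r/2$; combined with $t \ge r$ this is impossible unless $N \cap B_t(x)$ has at most one point anyway, so the bound $2^{dd \cdot 0} = 1$ holds trivially. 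A cleaner way to phrase it: since $t \ge r$ we have $2t/r \ge 2$, so $k \ge 1$, and the recursion runs at least once.

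\emph{Main obstacle.} There is no deep obstacle here; the argument is a textbook packing-vs-covering estimate. The only thing to get right is bookkeeping: (i) confirming that the ceiling $k = \lceil \log_2(2t/r)\rceil$ is exactly what forces the final covering radius below $r/2$ so that each small ball is a singleton with respect to $N$, and (ii) making sure every radius encountered during the iterated doubling is $\le \e$ so the hypothesis of the $\e$-doubling dimension is never violated — both of which follow immediately from $r \le t \le \e$. I would present this as a short induction on $k$ (or simply as an explicit $k$-fold composition of the covering step) and then conclude with the singleton observation.
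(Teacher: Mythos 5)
Your proof is correct and takes essentially the same route as the paper: iteratively apply the $\e$-doubling property $k = \lceil \log_2(2t/r) \rceil$ times to cover $B_t(x)$ by $2^{dd\cdot k}$ balls of radius $\le r/2$, then observe that each such ball can contain at most one point of $N$ since pairwise distances in $N$ are at least $r$. The extra bookkeeping you include (checking that every intermediate radius stays $\le \e$, and that $k \ge 1$ since $t \ge r$) is sound and simply makes explicit what the paper leaves implicit.
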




\begin{remark}
In the remainder of the paper we will only consider subsets of $\mathbb{R}^d$ and the metric we use is always the inherited $\ell_2$ metric from the whole space.
\end{remark}

\section{RANDOM PARTITIONS}\label{sec:randompartitions}

We now discuss {\em random partitions}, the main technical tool of the paper. For a metric space $(M,d)$ a {\em partition} $\pi$ of $M$ is as a function $\pi : M \xrightarrow{} 2^M$, mapping a point $x \in M$ to the unique set $\pi(x)$ in $\pi$ that contains $x$. 

Although in this section we formulate all statements with respect to a generic $M$, in the sequel it will be important that $M$ equals the support of the data distribution, i.e., $M = \text{supp}({\mathcal D})$. In particular this will come into play when the data lies on a manifold of small dimension embedded in the ambient space. \textbf{To simplify our notation we will not repeat this assertion in each subsequent statement.}

For $\epsilon > 0$ we say that $\pi$ is {\em $\epsilon$-bounded} if $\text{diam}(\pi(x)) \leq \epsilon$ for all $x \in M$. The main object of interest will be {\em random} partitions. We denote a random partition by $\Pi$ and assume that it has distribution ${\mathcal P}$. We say that $\Pi$ is $\epsilon$-bounded if $\Pi$, drawn according to $\mathcal{P}$, is $\epsilon$-bounded with probability $1$.


\begin{definition}[\textbf{Padded partitions}]
For a metric space $(M,d)$ we say that a random partition $\Pi \sim \mathcal{P}$ is $(\epsilon, \beta, \delta)$-\textit{padded} if it is $\epsilon$-bounded and for every $x \in M$:
$$\mathbb{P}_{\Pi \sim {\mathcal P}}[B_{\e/\beta}(x) \not\subseteq \Pi(x)] \leq \delta \text{.}$$
\end{definition}

\begin{restatable}{corollary}{propertyofpadded}
\label{cor:propertyofpadded}
Let $\Pi \sim \mathcal{P}$ be an $(\e,\beta,\delta)$-padded random partition of a metric space $(M,d)$. Then for every distribution $\mathcal{D}$ we have that:
$$\mathbb{E}_{\Pi \sim \mathcal{P}}[\mathbb{P}_{X \sim \mathcal{D}}[B_{\e/\beta}(X) \not\subseteq \Pi(X)]] \leq \delta \text{.}$$
\end{restatable}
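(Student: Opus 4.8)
The plan is to unwind both definitions and swap the order of expectation and probability using Fubini/Tonelli. Concretely, the event $B_{\e/\beta}(x) \not\subseteq \Pi(x)$ is a measurable event in the product space of partitions and points, so we may write
$$
\mathbb{E}_{\Pi \sim \mathcal{P}}\big[\mathbb{P}_{X \sim \mathcal{D}}[B_{\e/\beta}(X) \not\subseteq \Pi(X)]\big]
= \mathbb{E}_{\Pi \sim \mathcal{P}}\,\mathbb{E}_{X \sim \mathcal{D}}\big[\mathbbm{1}_{\{B_{\e/\beta}(X) \not\subseteq \Pi(X)\}}\big]
= \mathbb{E}_{X \sim \mathcal{D}}\,\mathbb{E}_{\Pi \sim \mathcal{P}}\big[\mathbbm{1}_{\{B_{\e/\beta}(X) \not\subseteq \Pi(X)\}}\big].
$$
The inner expectation on the right-hand side is exactly $\mathbb{P}_{\Pi \sim \mathcal{P}}[B_{\e/\beta}(x) \not\subseteq \Pi(x)]$ evaluated at $x = X$, which by the definition of an $(\e,\beta,\delta)$-padded partition is at most $\delta$ for every $x \in M$. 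Since $\mathcal{D}$ is supported on $M$, taking the outer expectation over $X$ gives a quantity bounded by $\delta$, which is the claim.

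The steps in order: (i) rewrite both probabilities as expectations of the indicator of the event $A := \{(\Pi, x) : B_{\e/\beta}(x) \not\subseteq \Pi(x)\}$; (ii) invoke Tonelli's theorem (the integrand is nonnegative, so no integrability hypothesis is needed) to exchange the two expectations; (iii) apply the padding hypothesis pointwise in $x$ to bound the inner integral by $\delta$; (iv) integrate the constant bound $\delta$ against $\mathcal{D}$, which is a probability measure, to conclude.

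The only real subtlety is the measurability of the event $A$ in the product $\sigma$-algebra — one needs that $(\Pi, x) \mapsto \mathbbm{1}_{\{B_{\e/\beta}(x) \not\subseteq \Pi(x)\}}$ is jointly measurable so that Tonelli applies. This is implicitly assumed throughout the random-partition literature (e.g., one models $\Pi$ via a measurable kernel and $\Pi(x)$ depends measurably on $(\Pi,x)$), and I would simply remark that it holds under the standard measurable-partition conventions rather than belabor it. Everything else is a one-line application of Tonelli and the hypothesis, so I expect the proof to be short; the ``obstacle,'' such as it is, is purely the bookkeeping of stating the measurability convention cleanly.
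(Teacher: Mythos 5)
Your proof is correct and is essentially identical to the paper's: both rewrite the probabilities as expectations of an indicator, swap the order of integration by Tonelli, and apply the pointwise padding bound. The only difference is that you explicitly flag the joint-measurability hypothesis needed for Tonelli, which the paper leaves implicit.
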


Now let's consider two random partitions:
\begin{definition}[\textbf{Cube partition}]\label{def:cubespart}
For the space $(\mathbb{R}^d,\ell_2)$ and parameter $\epsilon$ we define a Cube partition as a partition of $\mathbb{R}^d$ into cubes of width $\epsilon/\sqrt{d}$ corresponding to the shifted lattice $v + \frac{\epsilon}{\sqrt{d}}\cdot \mathbb{Z}^d$. Here the shift $v \sim U([0,\frac{\epsilon}{\sqrt{d}}]^d)$, i.e.,  $v$ is drawn uniformly at random from a fundamental region of the lattice $\frac{\epsilon}{\sqrt{d}}\cdot \mathbb{Z}^d$. A point $x$ which lies in the intersection of two or more cubes is assigned to the one that is crossed first by a ray $x + \alpha(1,1,\dots,1), \alpha \in \mathbb{R}_{\geq 0}$.
\end{definition}

\begin{definition}[\textbf{Ball carving partition}]\label{def:ballspart}
For a bounded $M \subseteq \mathbb{R}^d$ and $\e>0$ we define a ball carving partition as follows. Let $N$ be an $\epsilon/4$-net of $M$. Pick $R$ uniformly at random from the interval $(\epsilon/4, \epsilon/2]$. Let $\sigma$ be a random permutation of $N$. Then for each $u \in N$ define
$$\hat{\Pi}(u) := B_{R}(u) \setminus  \bigcup_{w: \sigma(w) < \sigma(u)} B_{R}(w) \text{.}$$
Since the radius $R$ can be strictly larger than some pairwise distances it can happen that for some $u \in N$, $\hat{\Pi}(u)$ does not contain $u$ itself, leading to a potential inconsistency in our notation for the points of the net $N$. Hence, for all $x \in M$ (and in particular the points of the net $N$ itself) let us define $\Pi(x)$ to be the unique $\hat{\Pi}(w)$, $w \in N$, that contains $x$.
\end{definition}

\begin{restatable}{lemma}{cubepartispadded}
\label{lem:cubepartispadded}
Let $\Pi$ be a Cube partition with parameter $\epsilon$. Then for every $\beta>2\sqrt{d}$ it is $\left(\epsilon,\beta,\frac{O(d^{1.5})}{\beta} \right)$-\textit{padded}.
\end{restatable}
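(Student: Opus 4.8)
The plan is to prove the padding property of the Cube partition by a union bound over the $d$ coordinate directions. Fix a point $x \in \R^d$. For each coordinate $i \in \{1,\dots,d\}$, let $H_i$ be the nearest hyperplane of the shifted lattice $v + \frac{\e}{\sqrt d}\Z^d$ that is perpendicular to the $i$-th axis; the signed distance from $x$ to this hyperplane along direction $i$ is, by construction of $v$, uniformly distributed on an interval of length $\frac{\e}{\sqrt d}$ (more precisely, the distance to the nearest such hyperplane is uniform on $[0, \frac{\e}{2\sqrt d}]$). First I would observe that if the Euclidean ball $B_{\e/\beta}(x)$ is \emph{not} contained in $\Pi(x)$, then (ignoring the measure-zero tie-breaking boundary) the ball must cross one of the $d$ axis-perpendicular lattice hyperplanes bounding the cube containing $x$; and crossing the $i$-th one requires the distance from $x$ to that hyperplane to be less than $\e/\beta$. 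So it suffices to bound, for each $i$, the probability that this one-dimensional distance is less than $\e/\beta$, and then union-bound over $i = 1, \dots, d$.

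Next I would do the one-dimensional computation: the distance to the nearest lattice hyperplane in direction $i$ is uniform on $[0, \frac{\e}{2\sqrt d}]$, so the probability it is smaller than $\e/\beta$ is $\frac{\e/\beta}{\e/(2\sqrt d)} = \frac{2\sqrt d}{\beta}$, which is at most $1$ precisely because we assumed $\beta > 2\sqrt d$ (this is where the hypothesis $\beta > 2\sqrt d$ is used — otherwise the "probability" would exceed $1$ and the bound would be vacuous). Summing over the $d$ coordinates gives $\mathbb{P}[B_{\e/\beta}(x) \not\subseteq \Pi(x)] \le d \cdot \frac{2\sqrt d}{\beta} = \frac{2 d^{1.5}}{\beta} = \frac{O(d^{1.5})}{\beta}$, which is the claimed padding parameter $\delta$. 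I would also note that $\epsilon$-boundedness is immediate: each cube has side length $\frac{\e}{\sqrt d}$, hence diameter $\sqrt d \cdot \frac{\e}{\sqrt d} = \e$, so $\Pi$ is $\e$-bounded as required by the definition of padded partition.

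The main technical point to get right — and the step I expect to need the most care — is the geometric claim that "$B_{\e/\beta}(x)$ escapes $\Pi(x)$ $\Rightarrow$ the ball crosses one of the $d$ axis-hyperplanes bounding $x$'s cube within distance $\e/\beta$." The subtlety is the tie-breaking rule: a point on the boundary of several cubes is assigned along the ray $x + \alpha(1,\dots,1)$, so $\Pi(x)$ is genuinely the open cube containing $x$ together with part of its boundary, and one has to check that the set of "bad" shifts $v$ for which the tie-breaking causes additional escape has measure zero (the event that $x$ lies exactly on a lattice hyperplane has probability $0$ over the random shift $v$). Once we condition on $x$ not lying on any lattice hyperplane, $\Pi(x)$ contains an open cube around $x$, and $B_{\e/\beta}(x) \subseteq \Pi(x)$ iff the ball stays within that open cube, i.e., iff $x$ is at distance $> \e/\beta$ from each of the $2d$ bounding faces; since the cube has side $\frac{\e}{\sqrt d}$ and $\e/\beta < \frac{\e}{2\sqrt d}$ (again using $\beta > 2\sqrt d$), $x$ cannot be within $\e/\beta$ of two opposite faces simultaneously, so it is enough that $x$ is far from the \emph{nearest} face in each of the $d$ directions — exactly the event analyzed above. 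Assembling these pieces yields the lemma.
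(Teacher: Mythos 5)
Your proof is correct and takes essentially the same approach as the paper: both reduce to a per-coordinate analysis of the random shift $v$, and your union bound $d \cdot \tfrac{2\sqrt d}{\beta}$ is exactly what the paper obtains by computing the exact volume ratio $\bigl(1 - \tfrac{2\sqrt{d}}{\beta}\bigr)^d$ and applying Bernoulli's inequality. The explicit care you take with the tie-breaking rule is a reasonable addition but does not change the substance of the argument.
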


The proof of the following Lemma is a slight modification of a proof presented in (\cite{spacePartitioning}).

\begin{lemma}\label{lem:ballcarvingispadded}
Let $\Pi$ be a Ball carving partition of a bounded $M \subseteq \mathbb{R}^d$ with parameter $\epsilon$. Then for every $\beta>1$ it is $(\epsilon,\beta,\frac{O(dd(M,\e))}{\beta})$-\textit{padded}.
\end{lemma}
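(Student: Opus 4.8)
The plan is to adapt the classical ball-carving argument (as in \cite{spacePartitioning}), the only change being that the union bound over net points is controlled via the $\e$-doubling dimension through Lemma~\ref{lem:notmanypoints}, rather than via $|N|$. First, $\e$-boundedness and the fact that $\Pi$ is a well-defined partition of $M$ are immediate from the construction: each $\hat{\Pi}(u)\subseteq B_R(u)$ has diameter at most $2R\le\e$, and since $N$ is an $\e/4$-net and $R>\e/4$ the balls $\{B_R(u)\}_{u\in N}$ already cover $M$, so each $x\in M$ lies in exactly one cell. So it remains to fix $x\in M$ and bound $\mathbb{P}_{R,\sigma}[B_{\e/\beta}(x)\not\subseteq\Pi(x)]$. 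We may assume $\beta\ge 2$; for $1<\beta<2$ the ball $B_{\e/\beta}(x)$ has diameter exceeding $\e\ge\mathrm{diam}(\Pi(x))$, so the statement is then only a claim about the constant hidden in $O(\cdot)$.

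Fix $x$. For a given value of $R$, call $u\in N$ \emph{relevant} if $B_R(u)$ meets $B_{\e/\beta}(x)$ (equivalently $d(x,u)<R+\e/\beta$), \emph{covering} if $B_{\e/\beta}(x)\subseteq B_R(u)$ (equivalently $d(x,u)\le R-\e/\beta$), and \emph{cutting} if it is relevant but not covering (equivalently $R-\e/\beta<d(x,u)<R+\e/\beta$). Note the closest net point is always relevant, since it is within $\e/4<R$ of $x$. The structural heart of the proof is the observation that $B_{\e/\beta}(x)\not\subseteq\Pi(x)$ exactly when the net point that comes first in the order $\sigma$ among all relevant net points is cutting: if that first relevant point $u$ is covering then no $\sigma$-earlier ball meets $B_{\e/\beta}(x)$, so $B_{\e/\beta}(x)\subseteq\hat{\Pi}(u)=\Pi(x)$; and if it is cutting then it either carves off a point of $B_{\e/\beta}(x)$ into its own cell or simply fails to contain all of $B_{\e/\beta}(x)$.

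Enumerate the net points in $B_\e(x)$ as $u_1,\dots,u_k$ in order of increasing distance $\rho_i:=d(x,u_i)$ from $x$; for $\beta\ge 2$ every net point that can ever be relevant satisfies $d(x,u)<R+\e/\beta\le\e$, so this list contains all of them. Since relevance is monotone in $d(x,\cdot)$, the relevant net points always form a prefix $u_1,\dots,u_m$ of this list. Hence if the $\sigma$-first relevant point is $u_i$, then $u_i$ is $\sigma$-minimal among $u_1,\dots,u_i$, which over the random permutation happens with probability exactly $1/i$; and if in addition $u_i$ is cutting then $R$ lies in the interval $(\rho_i-\e/\beta,\rho_i+\e/\beta)$ of length $2\e/\beta$, which over the uniform choice of $R$ from an interval of length $\e/4$ happens with probability at most $8/\beta$. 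These two events involve the independent randomness of $\sigma$ and of $R$, and over $i$ the events ``$u_i$ is the $\sigma$-first relevant point and is cutting'' are disjoint. Therefore
$$\mathbb{P}_{R,\sigma}\!\left[B_{\e/\beta}(x)\not\subseteq\Pi(x)\right]\ \le\ \sum_{i=1}^{k}\frac{1}{i}\cdot\frac{8}{\beta}\ =\ \frac{8\,H_k}{\beta}.$$
Since all pairwise distances in $N$ are at least $\e/4$, Lemma~\ref{lem:notmanypoints} with $r=\e/4$, $t=\e$ gives $k=|B_\e(x)\cap N|\le 2^{3\,dd(M,\e)}$, so $H_k\le 1+\ln k=O(dd(M,\e))$, and we obtain $\mathbb{P}_{R,\sigma}[B_{\e/\beta}(x)\not\subseteq\Pi(x)]\le O(dd(M,\e))/\beta$ for every $x\in M$, which is the desired padding bound.

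The step I expect to need the most care is the structural equivalence in the second paragraph — in particular the case where $x$ itself falls outside the first relevant ball, and keeping the open/closed-ball inclusions straight (two balls meet iff $d<R+\e/\beta$; one contains the other iff $d\le R-\e/\beta$). The remaining ingredients — the $1/i$ prefix-minimum probability, the independence of $R$ and $\sigma$, and converting the packing bound of Lemma~\ref{lem:notmanypoints} into $H_k=O(dd(M,\e))$ — are routine.
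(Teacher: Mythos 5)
Your proof is correct and follows essentially the same route as the paper's: the classical CKR-style ball-carving argument in which the failure event is decomposed over net points by distance, yielding a harmonic sum $\sum_i 1/i$ times the $O(1/\beta)$ probability of $R$ landing in the bad interval, with Lemma~\ref{lem:notmanypoints} applied to the $\e/4$-net to cap the number of terms at $2^{O(dd(M,\e))}$. Your write-up is somewhat more explicit than the paper's (the relevant/covering/cutting trichotomy, the prefix observation that makes the $1/i$ bound clean, and the treatment of small $\beta$), but the underlying decomposition and the use of the doubling-dimension packing bound are identical.
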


\begin{proof}
Recall that the net $N$ underlying the ball carving partition is an $\e/4$-net of $M$. Fix a point $x \in M$ and some $t \in [0,\epsilon/4]$. Let $W = B_{\epsilon/2 + t}(x) \cap N$, and note that by Lemma~\ref{lem:notmanypoints} we have that $m = |W| \leq 6^{dd(M,\e)}$.
Arrange the points $w_1,\dots,w_n \in W$ in order of increasing distance from $x$, and let $I_k$ be the interval $[d(x,w_k) - t, d(x,w_k) + t]$. Let us say that $B_{t}(x)$ is \textit{cut} by a cluster $\hat{\Pi}(w_k)$ if $\hat{\Pi}(w_k) \cap B_t(x) \neq \emptyset$ and $B_t(x) \not\subseteq \hat{\Pi}(w_k)$. Finally, write $\mathcal{E}_k$ for the event that $w_k$ is the minimal element in $W$ (according to $\sigma$) for which 
$\hat{\Pi}(w_k)$ cuts $B_t(x)$. Then, 
\begin{align*}
\mathbb{P}[B_t(x) \text{ is cut}] 
&\leq \sum_{k=1}^m \mathbb{P}[\mathcal{E}_k] \\
&= \sum_{k=1}^m \mathbb{P}[R \in I_k] \cdot \mathbb{P}[\mathcal{E}_k | R \in I_k] \\
&\leq \sum_{k=1}^m \frac{4t}{\epsilon} \cdot \frac{1}{k} \leq \frac{4t}{\epsilon}(1 + \ln m) \text{.}
\end{align*}
Using the fact that $m = |W| \leq 6^{dd(M,\e)}$ we get that:
$$\mathbb{P}[B_t(x) \text{ is cut}] \leq \frac{t \cdot (8 \cdot dd(M,\e) + 4)}{\epsilon} \text{.}$$
\end{proof}

\begin{corollary}\label{cor:ballspaddedforrd}
If $\Pi$ is a Ball carving partition of a bounded $M \subseteq \mathbb{R}^d$ with parameter $\epsilon$ then for every $\beta>1$ it is $\left(\epsilon,\beta,\frac{O(d)}{\beta}\right)$-\textit{padded}.
\end{corollary}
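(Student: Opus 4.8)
The plan is to obtain the corollary as an immediate specialization of Lemma~\ref{lem:ballcarvingispadded}, whose conclusion is that a ball carving partition of a bounded $M\subseteq\mathbb{R}^d$ (with the inherited $\ell_2$ metric) is $(\epsilon,\beta,O(dd(M,\e))/\beta)$-padded for every $\beta>1$. So the only thing to establish is the quantitative bound $dd(M,\e)=O(d)$, uniformly in $\e$. Since $dd(M,\e)\le dd((M,\ell_2))=\sup_{\e'>0}dd((M,\ell_2),\e')$, it is enough to show $dd((M,\ell_2))=O(d)$ for every bounded $M\subseteq\mathbb{R}^d$.

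For the ambient space Fact~\ref{fact:ddofrd} already gives $dd((\mathbb{R}^d,\ell_2))\le 3d$, so it remains to pass from $\mathbb{R}^d$ to the subset $M$. The subtle point is that in the metric space $(M,\ell_2)$ the covering balls in the definition of doubling dimension must be centered at points of $M$, whereas the ambient doubling property produces covering balls whose centers may lie outside $M$; the standard remedy is a re-centering argument, which I would spell out. Given a ball $B_M(x,r)=B_r(x)\cap M$ with $x\in M$, apply the ambient doubling property twice to cover the Euclidean ball $B_r(x)$ by at most $\lambda^2$ balls $B_{r/4}(c_i)$ of radius $r/4$, where $\lambda=2^{dd((\mathbb{R}^d,\ell_2))}$. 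Discard those $c_i$ for which $B_{r/4}(c_i)$ misses $M$, and for each remaining index pick a point $y_i\in B_{r/4}(c_i)\cap M$; by the triangle inequality $B_{r/4}(c_i)\cap M\subseteq B_{r/2}(y_i)\cap M=B_M(y_i,r/2)$. Hence $B_M(x,r)\subseteq\bigcup_i B_M(y_i,r/2)$ with at most $\lambda^2$ terms, which gives $dd((M,\ell_2))\le 2\,dd((\mathbb{R}^d,\ell_2))\le 6d$.

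Combining the two bounds, $dd(M,\e)\le 6d$ for every $\e>0$, and substituting into Lemma~\ref{lem:ballcarvingispadded} shows that a ball carving partition of $M$ with parameter $\epsilon$ is $(\epsilon,\beta,O(d)/\beta)$-padded for every $\beta>1$, which is exactly the claim.

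I do not expect a genuine obstacle here: the content is entirely a doubling-dimension estimate, and the only step needing care is the subset bound, because of the centering issue noted above. In fact that step can be bypassed altogether: one may instead re-run the counting step inside the proof of Lemma~\ref{lem:ballcarvingispadded}, bounding $m=|W|$ — the number of $\e/4$-net points contained in a Euclidean ball of radius at most $3\epsilon/4$ and having pairwise distances at least $\epsilon/4$ — directly by a Euclidean volume/packing argument, which yields $m=2^{O(d)}$ and hence $\ln m=O(d)$; plugging $t=\epsilon/\beta$ into $\mathbb{P}[B_t(x)\text{ is cut}]\le\tfrac{4t}{\epsilon}(1+\ln m)$ then again produces padding $O(d)/\beta$.
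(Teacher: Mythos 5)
Your proposal is correct, and it takes the same overall route as the paper: the paper's entire proof is the single sentence ``It's a consequence of Fact~\ref{fact:ddofrd} and Lemma~\ref{lem:ballcarvingispadded}.'' What you have done differently — and usefully — is spell out a step the paper silently elides. Lemma~\ref{lem:ballcarvingispadded} gives padding $O(dd(M,\e))/\beta$, where $M$ is the (possibly lower-dimensional) support, while Fact~\ref{fact:ddofrd} only bounds $dd((\mathbb{R}^d,\ell_2))$. Passing from the ambient doubling dimension to that of a subset genuinely requires a re-centering argument, since the covering balls produced by the ambient doubling property can have centers outside $M$; the clean statement is $dd((M,\ell_2))\le 2\,dd((\mathbb{R}^d,\ell_2))$, which you prove correctly via the ``cover twice at radius $r/4$, then re-center inside $M$'' device. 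So you and the paper agree on the skeleton; you just make the implicit $dd(M,\e)=O(d)$ explicit and correct.

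Your closing remark is also apt: one can bypass the subset-doubling lemma entirely by re-running the counting step of Lemma~\ref{lem:ballcarvingispadded} directly. The quantity $m=|W|$ there is the number of points of an $\e/4$-net (pairwise distances $\ge\e/4$) inside a Euclidean ball of radius at most $3\e/4$, and a standard volume/packing bound in $\mathbb{R}^d$ gives $m\le 2^{O(d)}$ with no reference to the doubling dimension of $M$ at all (equivalently, apply Lemma~\ref{lem:notmanypoints} to the ambient space $(\mathbb{R}^d,\ell_2)$, which is legitimate since $N\subseteq M\subseteq\mathbb{R}^d$). Either route lands on $\ln m = O(d)$ and hence padding $O(d)/\beta$. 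Both are fine; the second is arguably the most economical fix of the gap.
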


\begin{proof}
It's a consequence of Fact~\ref{fact:ddofrd} and Lemma~\ref{lem:ballcarvingispadded}.
\end{proof}

\section{FROM A PARTITION TO A CLASSIFIER}\label{sec:partitions}

To create a robust classifier $g$ from a low-risk classifier $f$ we will use the following framework:

\begin{algorithm}[H]
\caption{\textsc{Smooth}($f,\mathcal{P}$)}
\label{alg:framework}
\begin{algorithmic}[1]
	\State Partition ``the space'' using $\Pi \sim \mathcal{P}$ 
	\State \Return $g(x) := \text{sgn}(\mathbb{E}_{Z \sim \mathcal{D}}[f(Z) | Z \in \Pi(x) ])$
\end{algorithmic}
\end{algorithm}

First we want to argue that if a partition $\pi$ is $\e$-bounded then $g$ defined in Algorithm~\ref{alg:framework} will have small Risk.

\begin{restatable}{lemma}{boundriskofg}
\label{lem:boundriskofg}
Let $\pi$ be an $\e$-bounded partition. For a given $f$ let $g(x) = \text{sgn}(\mathbb{E}_{Z \sim \mathcal{D}}[f(Z) | Z \in \pi(x) ])$. Then
$$R(g) \leq 2S(\e) + 2R(f) \text{.}$$
\end{restatable}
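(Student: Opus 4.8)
The plan is to bound the risk of $g$ by relating the points where $g$ errs to the separation structure. First I would fix a set $E\subseteq\mathbb{R}^d$ that is a near-optimal $\e$-separator, so that $\mathbb{P}_X(X\in E)$ is within any desired slack of $S(\e)$, and the classes $M_-\setminus E$ and $M_+\setminus E$ are at distance $\geq\e$ apart. Write $A_- = M_-\setminus E$ and $A_+ = M_+\setminus E$. Because $\pi$ is $\e$-bounded, every set $\pi(x)$ has diameter at most $\e$, so no $\pi(x)$ can simultaneously intersect both $A_-$ and $A_+$ (two points from opposite "good" classes are more than $\e$ apart, hence cannot both lie in a set of diameter $\le\e$; here one should be slightly careful with open versus closed balls and the exact meaning of $d(\cdot,\cdot)\ge\e$ versus diameter $\le\e$, but this is the routine point). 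Call a partition cell $S$ \emph{pure} if $S\cap(A_-\cup A_+)$ lies entirely in one of $A_-$ or $A_+$; by the above, every cell that meets $A_-\cup A_+$ is pure.

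Next I would estimate, for a pure cell $S$ with $S\cap(A_-\cup A_+)\subseteq A_+$ (say, so the ground truth is $+1$ on all of $A_+\cap S$), the probability under $Z\sim\mathcal{D}$ conditioned on $Z\in S$ that $f$ disagrees with this label. The mass of $S$ splits into the "good" part $S\cap A_+$ and the "bad" part $S\cap E$. On the good part, $h\equiv +1$, so $f(Z)\ne +1$ only on the subset where $f$ itself errs. Thus $\mathbb{E}_{Z}[f(Z)\mid Z\in S]$ is at least $1 - 2\big(\mu(S\cap E) + \mu(\{f\ne h\}\cap S)\big)/\mu(S)$ (the factor $2$ coming from the worst case that $f = -1$ on all of $S\cap E$ and on the error set, each contributing $-1$ instead of $+1$). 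Hence $g(x)=\mathrm{sgn}(\cdots)= +1 = h(x)$ for all $x$ in such a cell, \emph{provided} $\mu(S\cap E) + \mu(\{f\ne h\}\cap S) < \tfrac12\mu(S)$; the same holds symmetrically for $A_-$-pure cells. So $g$ can only err at $x$ if either (i) $\pi(x)$ meets neither $A_-$ nor $A_+$ — impossible unless $\pi(x)\subseteq E$, which forces $x\in E$; or (ii) $x$ itself lies in $E$; or (iii) the "bad mass" inequality fails for $S=\pi(x)$.

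The final step is a union/counting bound over cells. Let $\mathcal{B}$ be the set of cells $S$ for which $\mu(S\cap E)+\mu(\{f\ne h\}\cap S)\ge\tfrac12\mu(S)$, together with cells contained in $E$. Summing, $\sum_{S\in\mathcal{B}}\mu(S)\le 2\sum_{S\in\mathcal{B}}\big(\mu(S\cap E)+\mu(\{f\ne h\}\cap S)\big)\le 2\big(\mathbb{P}_X(X\in E)+R(f)\big)$, since the sets $S\cap E$ and $S\cap\{f\ne h\}$ are disjoint across cells and $R(f)=\mathbb{P}_X(f(X)\ne h(X))$. Every $x$ on which $g$ errs lies in $E$ or in some cell of $\mathcal{B}$, and $\mathbb{P}_X(X\in E)$ is part of the above bound, so $R(g)\le 2\big(\mathbb{P}_X(X\in E)+R(f)\big)\le 2S(\e)+2R(f)+(\text{slack})$, and letting the slack go to $0$ gives the claim.

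The main obstacle I anticipate is not any single estimate but getting the bookkeeping exactly right so that no mass is double-counted and the constant is genuinely $2$: in particular handling cells that straddle $E$ and a good region, making sure the "$\mathrm{sgn}$ of a conditional expectation" threshold argument is stated correctly (strict vs. non-strict inequalities, and cells of measure zero under $\mathcal{D}$ which can be ignored), and confirming that $\e$-boundedness plus the $\ge\e$ separation of $A_-,A_+$ really does rule out a cell meeting both — which is where the factor $2\e$ in Lemma~\ref{lem:bestclassifier} morally comes from but here only $\e$ is needed because we compare against $S(\e)$ directly rather than against $AR(\e)$.
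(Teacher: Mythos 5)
Your proposal is correct and follows essentially the same route as the paper: fix a (near-)optimal $\e$-separator $E$, use $\e$-boundedness to show no cell can meet both $M_-\setminus E$ and $M_+\setminus E$, and bound the mass of each cell on which $g$ errs by twice the mass of $(\{f\ne h\}\cup E)$ inside that cell. The one delicate point you flag — avoiding double-counting of $E$ to get the constant $2$ rather than $3$ — is exactly the refinement the paper makes in its last paragraph, and your sketch resolves it correctly once one writes $\mu\bigl(E\cup\bigcup_{S\in\mathcal{B}}S\bigr)=\mu\bigl(E\setminus\bigcup_{S\in\mathcal{B}}S\bigr)+\sum_{S\in\mathcal{B}}\mu(S)$ so that $E$-mass inside $\mathcal{B}$-cells is weighted by $2$ and $E$-mass outside them by $1$, giving $2\mu(E)+2R(f)$ in total.
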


The following lemma collects the results from previous sections to obtain a bound on the Adversarial Risk of the classifier $g$ in terms of the best possible classifier.

\begin{restatable}{lemma}{combineeverything}
\label{lem:combineeverything}
For all $\e > 0$ and any binary classification task with underlying distribution $\mathcal{D}$ if there exists an $(\e\beta,\beta,\delta)$-padded random partition $\Pi$ of $\text{supp}(\mathcal{D})$ then the following conditions hold. There exists a randomized algorithm ALG that given black-box access to classifier $f$ produces a classifier $g$ such that in expectation over the random choices of ALG:
$$AR(g,\epsilon) \leq 2S(\e\beta) + 2R(f) + \delta$$
and if $AR(\e) > 0$ then:
$$AR(g,\epsilon) \leq \frac{2S(\e\beta)}{S(2\e)} AR(\e) + 2R(f) + \delta \text{.}$$
\end{restatable}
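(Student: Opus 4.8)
The plan is to use Algorithm~\ref{alg:framework} with the given $(\e\beta,\beta,\delta)$-padded random partition $\Pi$ as the randomized algorithm ALG, and then combine three ingredients: (i) the bound on the Risk of $g$ from Lemma~\ref{lem:boundriskofg}, applied with partition parameter $\e\beta$; (ii) the third bullet of Fact~\ref{fact:basicsaboutrisks}, which controls $AR(g,\e)$ by $R(g)$ plus the probability that $g$ is non-constant on an $\e$-ball; and (iii) Corollary~\ref{cor:propertyofpadded}, which says that in expectation over $\Pi\sim\mathcal P$ the mass of points $X$ whose $\e$-ball is not contained in $\Pi(X)$ is at most $\delta$ (here using that $(\e\beta)/\beta=\e$, so the padding radius is exactly $\e$).

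First I would note the key structural observation: since $g$ is constant on each cell $\Pi(x)$ of the partition (it is a function of $\Pi(x)$ only), if $B_\e(X)\subseteq\Pi(X)$ then $g$ is constant on $B_\e(X)$. Hence
$$\mathbb{P}_{X\sim\mathcal D}[g\text{ not const. on }B_\e(X)]\le \mathbb{P}_{X\sim\mathcal D}[B_\e(X)\not\subseteq\Pi(X)].$$
Taking expectation over $\Pi\sim\mathcal P$ and applying Corollary~\ref{cor:propertyofpadded} bounds the expected value of this quantity by $\delta$. Combining with Fact~\ref{fact:basicsaboutrisks}, $\mathbb{E}_\Pi[AR(g,\e)]\le \mathbb{E}_\Pi[R(g)]+\delta$. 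Since $\Pi$ is $\e\beta$-bounded (with probability $1$), Lemma~\ref{lem:boundriskofg} gives $R(g)\le 2S(\e\beta)+2R(f)$ pointwise in $\Pi$, so taking expectations yields the first claimed inequality $\mathbb{E}_\Pi[AR(g,\e)]\le 2S(\e\beta)+2R(f)+\delta$.

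For the second inequality, I would invoke Lemma~\ref{lem:bestclassifier}, which states $AR(\e)=S(2\e)$. When $AR(\e)>0$ we have $S(2\e)>0$, so we may write $2S(\e\beta)=\tfrac{2S(\e\beta)}{S(2\e)}\cdot S(2\e)=\tfrac{2S(\e\beta)}{S(2\e)}AR(\e)$, and substituting into the first bound gives exactly the stated estimate. The remaining routine points are: checking that ALG only needs black-box access to $f$ (it only evaluates $f$ on points of $\supp(\mathcal D)$ to estimate the conditional expectation defining $g$, or, in the idealized statement, uses $f$ as a black box inside the expectation), and that the expectation in the theorem is over the random draw of $\Pi$. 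The main obstacle is really just lining up the parameters correctly — making sure the padding radius $\e\beta/\beta=\e$ matches the robustness radius in $AR(g,\e)$ while the boundedness parameter $\e\beta$ is what feeds into $S(\cdot)$ via Lemma~\ref{lem:boundriskofg} — rather than any deep new argument; everything else is assembling previously established lemmas.
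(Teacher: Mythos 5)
Your proposal is correct and follows essentially the same route as the paper's proof: instantiate Algorithm~\ref{alg:framework} with the padded partition, apply the third bullet of Fact~\ref{fact:basicsaboutrisks} to split $AR(g,\e)$ into $R(g)$ plus a non-constancy term, bound $R(g)$ via Lemma~\ref{lem:boundriskofg} using the $\e\beta$-boundedness, control the non-constancy term in expectation by $\delta$ via Corollary~\ref{cor:propertyofpadded} with padding radius $(\e\beta)/\beta=\e$, and finally substitute $S(2\e)=AR(\e)$ from Lemma~\ref{lem:bestclassifier}. The parameter bookkeeping you highlight as the ``main obstacle'' is exactly where the paper's proof spends its effort too, so there is nothing to add.
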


\section{MAIN RESULTS}\label{sec:mainresults}
In this section we use the partitions defined in Section~\ref{sec:partitions} to derive explicit bounds for the Adversarial Risk of the created classifier.

\begin{theorem}\label{thm:cubes}
Assume that Algorithm~\ref{alg:framework} uses Cube partitions (see Definition~\ref{def:cubespart}). Let $\alpha>0$ and $\e > 0$. Then, in expectation over the randomness of the algorithm, 
$$AR(g,\epsilon) \leq 2S \left(\frac{ d^{\frac32} \cdot \e}{\alpha} \right) + 2R(f) + O(\alpha)$$
and if $AR(\e) > 0$ then
$$AR(g,\epsilon) \leq \frac{2S \left(\frac{ d^{\frac32} \cdot \e}{\alpha}\right)}{S(2\e)} AR(\e) + 2R(f) + O(\alpha).$$
\end{theorem}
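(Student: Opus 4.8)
The plan is to obtain Theorem~\ref{thm:cubes} by instantiating Lemma~\ref{lem:combineeverything} with the Cube partition. Recall that, for a target radius $\e$, Lemma~\ref{lem:combineeverything} asks for an $(\e\beta,\beta,\delta)$-padded random partition of $\supp(\mathcal D)$, i.e.\ one that is $\e\beta$-bounded and in which a ball of radius $(\e\beta)/\beta=\e$ around any fixed point stays inside its cell except with probability $\delta$. So the first step is to notice that the Cube partition with \emph{parameter} $\e\beta$ is exactly of this form: by Lemma~\ref{lem:cubepartispadded}, for every $\beta>2\sqrt d$ it is $\bigl(\e\beta,\beta,O(d^{3/2})/\beta\bigr)$-padded as a partition of $\R^d$. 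Since $\supp(\mathcal D)\subseteq\R^d$ carries the inherited $\ell_2$ metric, restricting the cells to $\supp(\mathcal D)$ can only decrease diameters, and $B_\e(x)\cap\supp(\mathcal D)\subseteq B_\e(x)$, so the restricted partition is still $\bigl(\e\beta,\beta,O(d^{3/2})/\beta\bigr)$-padded. Hence Algorithm~\ref{alg:framework} run with this Cube partition is precisely the algorithm ALG produced by Lemma~\ref{lem:combineeverything} with $\delta=O(d^{3/2})/\beta$.

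Next I would choose the free parameter $\beta$. We may assume $\alpha<d/2$: otherwise $\alpha=\Omega(1)$, and since $AR(g,\e)\le 1$, $S\ge 0$, $R(f)\ge 0$, and (in the second bound) $\tfrac{2S(\cdot)}{S(2\e)}AR(\e)\ge 0$ whenever $AR(\e)>0$, both displayed inequalities hold trivially once the constant hidden in $O(\alpha)$ is taken large enough. Under $\alpha<d/2$, set $\beta:=d^{3/2}/\alpha$, which satisfies $\beta>d^{3/2}/(d/2)=2\sqrt d$ as required by Lemma~\ref{lem:cubepartispadded}. With this choice $\e\beta=\dfrac{d^{3/2}\e}{\alpha}$ and $\delta=O(d^{3/2})/\beta=O(\alpha)$.

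Finally, plugging $\e\beta=d^{3/2}\e/\alpha$ and $\delta=O(\alpha)$ into the two conclusions of Lemma~\ref{lem:combineeverything} gives, in expectation over the algorithm's randomness,
$$AR(g,\e)\le 2S\!\left(\frac{d^{3/2}\e}{\alpha}\right)+2R(f)+O(\alpha),$$
and, when $AR(\e)>0$,
$$AR(g,\e)\le \frac{2S\!\left(\frac{d^{3/2}\e}{\alpha}\right)}{S(2\e)}AR(\e)+2R(f)+O(\alpha),$$
which is exactly the statement. I do not expect a genuine obstacle: the real work is already done in Lemma~\ref{lem:combineeverything} and Lemma~\ref{lem:cubepartispadded}, and the only points requiring care are (i) the ``$\e\beta$ versus $\beta$'' bookkeeping when matching the padding parameters, (ii) verifying $\beta>2\sqrt d$ and dispatching the complementary regime $\alpha\ge d/2$, and (iii) the (routine) observation that a partition of $\R^d$ restricts to a partition of $\supp(\mathcal D)$ with no worse padding constants. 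Note that the choice $\beta=d^{3/2}/\alpha$ makes the argument of $S$ match exactly, so monotonicity of $S$ is not even needed.
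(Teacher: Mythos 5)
Your proposal is correct and matches the paper's proof, which simply cites Lemma~\ref{lem:combineeverything} together with Lemma~\ref{lem:cubepartispadded}; you have filled in the bookkeeping (the choice $\beta=d^{3/2}/\alpha$, the check $\beta>2\sqrt d$, and the trivial regime $\alpha\ge d/2$) that the paper leaves implicit.
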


\begin{proof}
It is a consequence of Lemma~\ref{lem:combineeverything} and Lemma~\ref{lem:cubepartispadded}.
\end{proof}

To understand the interplay of the parameters it's instructive to consider the following case. If $S \left(\frac{O\left(d^{\frac32}\right)}{ \alpha} \epsilon \right)$ and $S(2\epsilon)$ are comparable, say their ratio is upper-bounded by a constant $C$, and $\alpha$ is some small constant then the theorem says that the classifier produced by the algorithm satisfies:
\begin{equation}
AR(g,\epsilon) \leq 2C \cdot AR(\epsilon) + 2 \cdot R(f) + O(1) \text{.}
\end{equation}
That is, the produced classifier is at most $2C$ times (plus additive error) worse than the optimal one.

Next we present an algorithm with a better bound that uses Ball carving partitions.

\begin{theorem}\label{thm:balls}
Assume that Algorithm~\ref{alg:framework} uses Ball carving partitions (see Definition~\ref{def:ballspart}). Let $\alpha>0$ and $\e > 0$. Then, in expectation over the randomness of the algorithm,
$$AR(g,\epsilon) \leq 2S \left(\frac{ d \cdot \e}{\alpha} \right) + 2R(f) + O(\alpha)$$
and if $AR(\e) > 0$ then:
$$AR(g,\epsilon) \leq \frac{2S \left(\frac{ d \cdot \e}{\alpha}\right)}{S(2\e)} AR(\e) + 2R(f) + O(\alpha).$$
\end{theorem}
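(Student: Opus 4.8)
The plan is to mimic exactly the proof of Theorem~\ref{thm:cubes}, simply swapping in the Ball carving padding bound for the Cube partition one. Concretely, I would invoke Lemma~\ref{lem:combineeverything}, which says that whenever $\mathrm{supp}(\mathcal{D})$ admits an $(\e'\beta,\beta,\delta)$-padded random partition, there is a randomized algorithm producing $g$ with $AR(g,\e') \le 2S(\e'\beta) + 2R(f) + \delta$ (and the normalized version when $AR(\e')>0$). The task is then just to pick the parameters so that the padded partition in question is a Ball carving partition and the resulting bound matches the claimed one.

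First I would apply Corollary~\ref{cor:ballspaddedforrd}: a Ball carving partition of a bounded $M \subseteq \mathbb{R}^d$ with parameter $\epsilon_0$ is $\left(\epsilon_0,\beta,\frac{O(d)}{\beta}\right)$-padded for every $\beta>1$. To line this up with Lemma~\ref{lem:combineeverything}, whose hypothesis wants an $(\e\beta,\beta,\delta)$-padded partition, I set $\epsilon_0 = \e\beta$ and $\delta = \frac{O(d)}{\beta}$. Then Lemma~\ref{lem:combineeverything} yields, in expectation over the algorithm's randomness,
$$AR(g,\epsilon) \le 2S(\e\beta) + 2R(f) + \frac{O(d)}{\beta},$$
and, when $AR(\e)>0$, the analogous inequality with $2S(\e\beta)$ replaced by $\frac{2S(\e\beta)}{S(2\e)}AR(\e)$. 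Now I choose $\beta$ so that the error term $\frac{O(d)}{\beta}$ equals $O(\alpha)$; that is, $\beta = \Theta(d/\alpha)$, which is indeed $>1$ for small $\alpha$, so Corollary~\ref{cor:ballspaddedforrd} applies. With this choice $\e\beta = \Theta\!\left(\frac{d\cdot\e}{\alpha}\right)$, and since $S$ is nondecreasing the term $2S(\e\beta)$ is bounded by $2S\!\left(\frac{d\cdot\e}{\alpha}\right)$ up to adjusting the hidden constant inside the $\Theta$. This gives both displayed inequalities of the theorem.

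One bookkeeping point worth stating explicitly: Lemma~\ref{lem:combineeverything} is phrased for a partition of $\mathrm{supp}(\mathcal{D})$, and the Ball carving partition in Definition~\ref{def:ballspart} requires $M$ to be bounded; as in Theorem~\ref{thm:cubes}, I would either assume $\mathrm{supp}(\mathcal{D})$ is bounded (or truncate to a ball carrying all but negligible mass, folding the lost mass into the additive $O(\alpha)$ term), so that Corollary~\ref{cor:ballspaddedforrd} is applicable with $M = \mathrm{supp}(\mathcal{D})$. I do not expect any genuine obstacle here — the content of the theorem is entirely in the already-proven Lemma~\ref{lem:ballcarvingispadded} / Corollary~\ref{cor:ballspaddedforrd} (the $O(d)$ there, via Fact~\ref{fact:ddofrd}'s $dd(\mathbb{R}^d,\ell_2)\le 3d$, is precisely what improves the $O(d^{3/2})$ of the cube partition) and in Lemma~\ref{lem:combineeverything}. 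The only thing to be careful about is tracking the hidden constants through the substitution $\beta = \Theta(d/\alpha)$ so that the final ``$O(\alpha)$'' and the argument ``$\frac{d\cdot\e}{\alpha}$'' come out clean, exactly mirroring the corresponding step in the proof of Theorem~\ref{thm:cubes}.

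\begin{proof}
Combine Lemma~\ref{lem:combineeverything} with Corollary~\ref{cor:ballspaddedforrd}, setting $\beta = \Theta(d/\alpha)$ so that the padding parameter $\delta = O(d)/\beta = O(\alpha)$ and $\e\beta = \Theta(d\e/\alpha)$; monotonicity of $S$ then gives the stated bounds.
\end{proof}
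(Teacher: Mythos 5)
Your proof is correct and matches the paper's proof exactly: the paper's own argument for Theorem~\ref{thm:balls} is precisely ``combine Lemma~\ref{lem:combineeverything} with Corollary~\ref{cor:ballspaddedforrd},'' and your parameter bookkeeping ($\beta = \Theta(d/\alpha)$, so $\delta = O(d)/\beta = O(\alpha)$ and $\e\beta = \Theta(d\e/\alpha)$) is the intended instantiation.
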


\begin{proof}
It is a consequence of Lemma~\ref{lem:combineeverything} and Corollary~\ref{cor:ballspaddedforrd}.
\end{proof}

Finally we generalize Theorem~\ref{thm:balls} to the case when the support of the underlying distribution is a low-dimensional manifold.

\begin{theorem}\label{thm:manifold}
Assume that Algorithm~\ref{alg:framework} uses Ball carving partitions (see Definition~\ref{def:ballspart}) and that $\text{supp}(\mathcal{D}) \subseteq \mathbb{R}^d$ is a $\textbf{d'} \leq d$ dimensional manifold such that the second fundamental form is uniformly bounded by $\kappa$. Assume further that for all $x \in M$, and for all $r \leq 1/2\kappa$ the intersection $B_{r}(x) \cap M$ has at most $2^{O(d')}$ connected components. Let $\alpha>0$ and $\e \leq \frac{\alpha}{O(d') \kappa}$. Then, in expectation over the randomness of the algorithm,
$$AR(g,\epsilon) \leq 2S \left(\frac{ \textbf{d'} \cdot \e}{\alpha} \right) + 2R(f) + O(\alpha)$$
and if $AR(\e) > 0$ then:
$$AR(g,\epsilon) \leq \frac{2S \left(\frac{ \textbf{d'} \cdot \e}{\alpha}\right)}{S(2\e)} AR(\e) + 2R(f) + O(\alpha).$$


\end{theorem}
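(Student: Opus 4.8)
\textbf{Proof plan for Theorem~\ref{thm:manifold}.}

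The plan is to follow the exact same template used for Theorems~\ref{thm:cubes} and~\ref{thm:balls}, namely to combine the general reduction in Lemma~\ref{lem:combineeverything} with an appropriate padding bound for the Ball carving partition, except that now the padding bound must be the manifold-sensitive version from Lemma~\ref{lem:ballcarvingispadded} together with the curvature estimate in Fact~\ref{fact:aboutcurvature}. Concretely, set $M := \text{supp}(\mathcal{D})$, which by hypothesis is a $d'$-dimensional submanifold of $\mathbb{R}^d$ with second fundamental form bounded by $\kappa$ and with the stated bound of $2^{O(d')}$ on the number of connected components of small metric balls.

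First I would invoke Fact~\ref{fact:aboutcurvature}: since $\e \le \tfrac{\alpha}{O(d')\kappa} \le \tfrac{1}{2\kappa}$ (for $\alpha$ not too large, which we may assume since otherwise the $O(\alpha)$ term makes the bound vacuous), we get $dd(M,\e') = O(d')$ for the relevant scale $\e'$. Next, apply Lemma~\ref{lem:ballcarvingispadded} to the space $(M,d)$ with the inherited $\ell_2$ metric: it yields, for every $\beta > 1$, an $(\e',\beta,\tfrac{O(dd(M,\e'))}{\beta})$-padded random partition of $M$, i.e.\ an $(\e',\beta,\tfrac{O(d')}{\beta})$-padded partition. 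Then I would feed this into Lemma~\ref{lem:combineeverything} exactly as in the proof of Theorem~\ref{thm:balls}: write $\e' = \e\beta$ so that $\delta = \tfrac{O(d')}{\beta}$, and choose $\beta$ so that $\delta = O(\alpha)$, i.e.\ $\beta = \Theta(d'/\alpha)$. Substituting into the conclusion of Lemma~\ref{lem:combineeverything} gives $AR(g,\epsilon) \le 2S(\e\beta) + 2R(f) + \delta = 2S\!\left(\tfrac{O(d')\e}{\alpha}\right) + 2R(f) + O(\alpha)$, and dividing by $S(2\e)$ and using Lemma~\ref{lem:bestclassifier} ($AR(\e) = S(2\e)$) gives the second, ratio form when $AR(\e) > 0$. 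Absorbing the hidden constant in $O(d')$ into $\alpha$ (or simply writing $d'$ with an implicit constant, as the paper does in the theorem statement) finishes the argument.

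The only genuinely new ingredient compared to Theorem~\ref{thm:balls} is the need to replace the crude bound $dd(\mathbb{R}^d,\e)\le 3d$ (Fact~\ref{fact:ddofrd}) by the sharper $dd(M,\e)=O(d')$, and the main obstacle — really the only place where care is needed — is checking that the hypotheses of Fact~\ref{fact:aboutcurvature} are met at the scale at which Lemma~\ref{lem:ballcarvingispadded} is applied. In the Ball carving construction the net $N$ is an $\e'/4$-net and the relevant doubling-dimension bound is needed for balls up to radius $\e'$ (this is exactly the hypothesis ``for all $e' \le \e$, for all $x \in M$, $B_{\e}(x)\cap M$ has at most $2^{O(d')}$ connected components'' of Fact~\ref{fact:aboutcurvature}), so one must verify that the chosen $\e' = \e\beta$ satisfies $\e' \le 1/2\kappa$. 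Since $\e' = \e\beta = \Theta(\e d'/\alpha)$ and we assumed $\e \le \tfrac{\alpha}{O(d')\kappa}$ precisely so that $\e d'/\alpha \lesssim 1/\kappa$, this works out; making the implied constants consistent between the ``$O(d')$'' appearing in the curvature hypothesis, in the doubling-dimension conclusion, and in the final value of $\beta$ is the one bookkeeping step to get right. Everything else is a verbatim repetition of the proof of Theorem~\ref{thm:balls}.
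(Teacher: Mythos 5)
Your proposal is correct and follows exactly the same route as the paper, which simply cites Lemma~\ref{lem:combineeverything}, Lemma~\ref{lem:ballcarvingispadded}, and Fact~\ref{fact:aboutcurvature}. In fact you fill in the one detail the paper leaves implicit — checking that the Ball carving scale $\e' = \e\beta = \Theta(\e d'/\alpha)$ stays below $1/(2\kappa)$ so that Fact~\ref{fact:aboutcurvature} applies, which is precisely what the hypothesis $\e \le \alpha/(O(d')\kappa)$ is for.
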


\begin{proof}
It's a consequence of Lemma~\ref{lem:combineeverything}, Lemma~\ref{lem:ballcarvingispadded} and Fact~\ref{fact:aboutcurvature}.
\end{proof}

Note that all theorems in this section give bounds in the expectation over the randomness of the algorithms. By applying Markov inequality, we can convert these bounds to bounds that are worse by a factor $\gamma>1$ but hold  with probability $1 - 1/\gamma$.

\section{COMPUTING $\text{sgn}(\mathbb{E}[f(Z) | Z \!\!\in\!\! \pi(x) ])$}
Recall that $g(x) := \text{sgn}(\mathbb{E}_{Z \sim \mathcal{D}}[f(Z) | Z \in \pi(x) ])$.
As we do not know the distribution $\mathcal{D}$ we cannot compute this expectation directly.
\subsection{Scheme A: Approximation with oracle}
One approach is to approximate the expectation by a sample mean
\begin{equation}\label{eq:approxtogdata}
\hat{g}(x) := \text{sgn} \left(\frac{1}{s} \sum_{i=1}^s f(Z_i) \right)  \text{,}
\end{equation}
where the $Z_i$'s are i.i.d. samples from the distribution $\mathcal{D}$ conditioned on being inside $\pi(x)$. To compute this sum we need samples from $\mathcal{D}$. Note that {\em unlabeled} samples suffice. 

We will bound the number of samples needed to estimate $\hat{g}$ so that $\hat{g}$ has small adversarial risk. Let $x \in \R^d$, and assume that $| \mathbb{E}_{Z \sim \mathcal{D}}[f(Z) | Z \!\!\in\!\! \pi(x) ] - \frac{1}{2} | \geq 0.1$. If we use $s$ samples to estimate $\hat{g}(x)$ according to \eqref{eq:approxtogdata}, then, using standard tail bounds,
\begin{equation}\label{eq:succesprob}
\mathbb{P}[g(x) \neq \hat{g}(x)] \leq e^{-\Theta(s)} \text{.}
\end{equation}
Now assume that $\pi$ has $Q$ sets $S_1,S_2,\dots,S_Q$. For $i \in \{1,\dots,Q\}$ let $p_i := \mathbb{P}_{X \sim \mathcal{D}}[X \in S_i]$.
We only need to worry about sets $S_i$ whose probability $p_i$ is not too small. Hence, let $H \subseteq \{i \in \{1,\dots,Q\} :  p_i \geq \frac{R(f)}{Q}\}$. We can argue now, as in the coupon collector's problem, that if we draw \begin{align}
O\left(\frac{Q}{R(f)} \log \left(\frac{Q}{R(f)} \right)+\frac{Q \log(Q)}{R(f)} \log \log \left(\frac{Q}{R(f)} \right)\right) \label{equ:numofsamples}
\end{align}
samples from $\mathcal{D}$ then with constant probability, for every $i \in H$ at least $\Theta \left(\log(Q) \right)$ samples will end up in set $S_i$. 

Observe that sets not in $H$ cover negligible mass of $\mathcal{D}$:
\begin{equation}\label{eq:smallsets}
\sum_{i \in \{1,\dots,Q \} \setminus H} p_i \leq R(f) \text{.}
\end{equation}
Now let $F := \{i \in \{1,\dots,Q\} : | \mathbb{E}_{Z \sim \mathcal{D}}[f(Z) | Z \!\!\in\!\! S_i ] - \frac{1}{2} | \leq 0.1 \}$ and notice that sets from $F$ also cover negligible mass of $\mathcal{D}$:
\begin{equation}\label{eq:closesets}
\sum_{i \in F} p_i \leq O(R(f)) \text{,}
\end{equation}
because if $i \in F$ then at least a $0.4$ fraction of points from $S_i$ is misclassified. Putting everything together: by \eqref{eq:succesprob} and the union bound over $Q$ sets, if we sample \eqref{equ:numofsamples} points from $\mathcal{D}$ then with constant probability, for every $i \in \{1,\dots,Q\} \setminus (F \cup H)$ $\hat{g}$ is equal to $g$ on $S_i$, which by using \eqref{eq:smallsets}, \eqref{eq:closesets} and Lemma~\ref{lem:boundriskofg} implies that
$R(\hat{g}) \leq O(R(f) + S(\e))$. 
As a consequence, all theorems from Sections~\ref{sec:mainresults} remain true in this setting up to some changes in the constant factors. For instance a variant of Theorem~\ref{thm:balls} would state: 

\begin{theorem}\label{thm:estimatorofg}
Assume that we sample $$O\left(\frac{Q}{R(f)} \log \left(\frac{Q}{R(f)} \right)+\frac{Q \log(Q)}{R(f)} \log \log \left(\frac{Q}{R(f)} \right)\right)$$ points from $\mathcal{D}$ to estimate $\hat{g}$.
Assume further that Algorithm~\ref{alg:framework} uses Ball carving partitions (see Definition~\ref{def:ballspart}). Let $\alpha>0$ and $\e > 0$. Then, with constant probability over the randomness of the algorithm,
$$AR(\hat{g},\epsilon) \leq O\left(S \left(\frac{ d \cdot \e}{\alpha} \right) + R(f) + \alpha \right)$$
and if $AR(\e) > 0$ then:
$$AR(\hat{g},\epsilon) \leq O \left(\frac{S \left(\frac{ d \cdot \e}{\alpha}\right)}{S(2\e)} AR(\e) + R(f) + \alpha \right).$$
\end{theorem}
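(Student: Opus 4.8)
The plan is to run Algorithm~\ref{alg:framework} with the Ball carving partition and then mirror the proof of Theorem~\ref{thm:balls}, replacing $g$ by its sample estimate $\hat g$ and absorbing an extra $O(R(f))$ of slack into the $O(\cdot)$'s; the two ingredients I need to add to the outline preceding the statement are the passage from a bound on $R(\hat g)$ to one on $AR(\hat g,\e)$, and the choice of the free parameter of the partition.

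First I would fix the algorithm's randomness in stages. Draw $\Pi\sim\mathcal P$, a Ball carving partition of $M=\supp(\mathcal D)$ with parameter $\e\beta$, with sets $S_1,\dots,S_Q$ and masses $p_i=\mathbb P_{X\sim\mathcal D}[X\in S_i]$; then draw the $m$ unlabeled samples, with $m$ as in \eqref{equ:numofsamples} for this realized $Q$, route each sample to the set containing it, and let $\hat g$ be the resulting classifier, which is \emph{constant on each $S_i$} by construction. With $H\subseteq\{i:p_i\ge R(f)/Q\}$ and $F=\{i:|\mathbb E_{Z\sim\mathcal D}[f(Z)\mid Z\in S_i]-\tfrac12|\le 0.1\}$ as in the discussion, I would record the three facts stated there: $\sum_{i\notin H}p_i\le R(f)$; $\sum_{i\in F}p_i\le O(R(f))$, since each $S_i$ with $i\in F$ has a constant fraction of $f$-mistakes; and --- the only probabilistic input at this level --- a coupon-collector estimate showing that, since every $i\in H$ is hit with probability $\ge R(f)/Q$ on each draw, the sample size $m$ is large enough so that with some absolute constant probability every $i\in H$ receives at least $k=\Theta(\log Q)$ of the samples. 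Conditioned on that event I would apply \eqref{eq:succesprob} and a union bound over the at most $Q$ indices in $H\setminus F$: for each such $i$ the defining conditional expectation is $0.1$-far from the threshold and $s=k=\Theta(\log Q)$, so the per-set failure probability is $e^{-\Theta(\log Q)}=Q^{-\Theta(1)}$, and with $k$ chosen with a large enough constant the union bound gives, again with constant probability, $\hat g=g$ on $\bigcup_{i\in H\setminus F}S_i$.

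Next I would bound $R(\hat g)$ and then $AR(\hat g,\e)$. On $\bigcup_{i\in H\setminus F}S_i$ we have $\hat g=g$, while its complement $\bigcup_{i\notin H}S_i\cup\bigcup_{i\in F}S_i$ has mass $O(R(f))$, so $R(\hat g)\le R(g)+O(R(f))\le 2S(\e\beta)+O(R(f))$ by Lemma~\ref{lem:boundriskofg} (applicable since $\Pi$ is $\e\beta$-bounded). For the adversarial risk I would use that $\hat g$ is constant on each partition set: it can fail to be constant on $B_\e(X)$ only when $B_\e(X)\not\subseteq\Pi(X)$, so by the third bullet of Fact~\ref{fact:basicsaboutrisks}, $AR(\hat g,\e)\le R(\hat g)+\mathbb P_{X}[B_\e(X)\not\subseteq\Pi(X)]$; by Corollary~\ref{cor:propertyofpadded} the expectation over $\Pi$ of the last term is at most $\delta$, hence by Markov it is $O(\delta)$ with constant probability. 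Taking a union bound over the (at most three) constant-probability events, $AR(\hat g,\e)\le 2S(\e\beta)+O(R(f))+O(\delta)$. Finally, by Corollary~\ref{cor:ballspaddedforrd} the Ball carving partition with parameter $\e\beta$ is $(\e\beta,\beta,O(d)/\beta)$-padded for every $\beta>1$; choosing $\beta=\Theta(d/\alpha)$ makes $\delta=O(\alpha)$ and $\e\beta=\Theta(d\e/\alpha)$, which yields $AR(\hat g,\e)\le O\bigl(S(d\e/\alpha)+R(f)+\alpha\bigr)$. When $AR(\e)>0$, Lemma~\ref{lem:bestclassifier} gives $S(2\e)=AR(\e)$, and rewriting $S(d\e/\alpha)=\frac{S(d\e/\alpha)}{S(2\e)}AR(\e)$ gives the second bound.

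The step I expect to cost the most care is the bookkeeping around $k=\Theta(\log Q)$: the sample budget \eqref{equ:numofsamples} must be simultaneously large enough for the coupon-collector bound to place $\ge k$ samples in every heavy set \emph{and} for $k$ to be large enough that $Q\cdot e^{-\Theta(k)}$ from the union bound over $H\setminus F$ is a small constant --- this is exactly why the second term of \eqref{equ:numofsamples} carries the extra $\log Q$ factor --- and then one must verify that the coupon-collector event, the estimation union bound, and the Markov event for the padding term can all be forced to hold together with positive constant probability (conditioning on $\Pi$ first, then using that the first two events have $\Pi$-independent constant success probability). Everything else is a routine reprise of the proof of Theorem~\ref{thm:balls}.
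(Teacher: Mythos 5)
Your proposal is correct and follows essentially the same approach as the paper's discussion preceding the theorem (the paper does not give a separate formal proof of Theorem~\ref{thm:estimatorofg}, only this discussion). You have in fact tightened a couple of loose ends in the paper's exposition: the set on which $\hat g$ provably agrees with $g$ should indeed be $H\setminus F$ (the paper's ``$\{1,\dots,Q\}\setminus(F\cup H)$'' is a typo), and you are more explicit about how the three probabilistic events — the coupon-collector event, the per-set estimation union bound, and the Markov bound for the padding term conditioned on $\Pi$ — are combined to give a single constant success probability. Your passage from $R(\hat g)$ to $AR(\hat g,\e)$ via the observation that $\hat g$ is piecewise constant on $\Pi$, together with the third bullet of Fact~\ref{fact:basicsaboutrisks}, Corollary~\ref{cor:propertyofpadded}, and the choice $\beta=\Theta(d/\alpha)$, is exactly the mechanism used in the proof of Lemma~\ref{lem:combineeverything} and Theorem~\ref{thm:balls}, so the whole argument goes through as claimed.
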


\subsection{Scheme B: Approximation by uniform sampling}
If $Q$ is large then an alternative approach to estimating $g$ might be preferable. One might hope that 
\begin{equation}\label{eq:approxtog}
g(x) \approx \text{sgn}(\mathbb{E}_{Z \sim U(\pi(x))}[f(Z)]).
\end{equation}
In words, the expectation of $f$ over the whole set $\pi(x)$ is a good proxy to the expectation of $f$ with respect to $\mathcal{D}$ conditioned on being in set $\pi(x)$. If that is the case then instead of performing the smoothing with respect to the data distribution $\mathcal{D}$ we smooth with respect to the uniform distribution on a set of the partition. 
There are experimental results that indicate that assumption \eqref{eq:approxtog} is reasonable. In particular, the approach to approximate $g(x)$ according to \eqref{eq:approxtog} is similar to the smoothing used in \cite{smoothingPearson} and \cite{smoothingMicrosoft} -- in these works the smoothing is performed by adding a random Gaussian noise to the input. So also in this case the smoothing does not depend on $\mathcal{D}$. Authors of these papers show that their methods outperform all previous defenses against $\ell_2$-norm adversarial perturbations. This suggests that assumption \eqref{eq:approxtog} holds. 

A disadvantage of that approach is that it's hard to prove any theoretical guarantees for this algorithm because, as we discussed before, classifiers with small risk can still behave widely outside of $\text{supp}(\mathcal{D})$. The main advantage of this approach is that we don't require any additional data, apart from access to $f$, to compute $\hat{g}$. So if \eqref{eq:approxtog} holds then the theorems from Section~\ref{sec:mainresults} give a direct, affirmative answer to the question posed in the introduction.

The discussion about running times is deferred to the Appendix C. 

\section{THOUGHT EXPERIMENTS}\label{sec:experiments}

In this section we will present two data distributions and we will show how the implied guarantees from Section~\ref{sec:mainresults} compare to the optimum.

\subsection{Concentric spheres}\label{subsec:advspheres}

First let's analyze the concentric spheres dataset considered in \cite{adversarialSpheres}. The data distribution consists of two concentric spheres in $d$ dimensions: we generate $x \in \mathbb{R}^d$ where $||x||_2$ is either $1.0$ or $1.3$, with equal probability assigned to each norm. We associate with each $x$ a label $y$ such that $y = -1$ if $||x||_2 = 1.0$ and $y = +1$ otherwise. 

First observe that the data is perfectly separable and that the optimal classifier
$$
    g_{opt}(x) =
    \begin{cases}
      -1, & \text{if}\ ||x||_2 \leq 1.15 \\
      +1, & \text{otherwise}
    \end{cases}
$$
obtains $AR(g_{opt},0.15) = 0$, which is the information-theoretic optimum. Assume that we have access to a classifier $f$ such that $R(f) = \delta$. Now we want to analyze the performance of our algorithm. More precisely, we compare our algorithm to the set of classifiers
$$\mathcal{H} := \{g : \R^d \xrightarrow{} \{-1,1\} \ | \ R(g) \geq \delta\} \text{,}$$
and not $g_{opt}$. The constraint $R(g) \geq \delta$ is natural as it means that we want to be competitive against  classifiers that are no better than the input classifier $f$.

Now assume that we want to produce a classifier $ALG(f)$ such that $AR(ALG(f),\e) \leq \eta$, for some $\eta \in \R_+$. We should compare the following two quantities:
\begin{equation}\label{eq:ealgdef}
\e_{alg} := \argmax_{\e \in \R_+} \ [AR(ALG(f), \e) \leq \eta ] \text{,}
\end{equation}
\begin{equation}\label{eoptdef}
\e_{opt} := \argmax_{\e \in \R_+} \left[ \min_{g \in \mathcal{H} } AR(g,\e) \leq \eta \right] \ \text{.}
\end{equation}

Observe that the separation function for this dataset is\footnote{The separating function $S(\epsilon)$ does not reach $1$ for large values of $\epsilon$ as one might think at first since one can always completely remove one class in order to guarantee a separation of $\infty$.}
$$
    S(\e) =
    \begin{cases}
      0, & \text{if}\ \e < 0.3, \\
      1/2, & \text{otherwise}.
    \end{cases}
$$
Then Theorem~\ref{thm:balls} guarantees that we can produce $ALG(f)$ so that: 
$$
AR(ALG(f), \epsilon) \leq 2\delta + O(\epsilon \cdot d)  \text{.}
$$
Using definition \eqref{eq:ealgdef} this gives us that $\e_{alg} \geq  \Theta(\frac{\eta - 2\delta}{d})$.

Now let $g \in \mathcal{H}$. Recall that by definition $R(g) \geq \delta$.  Let $S_{in}$ and $S_{out}$ denote the inner and outer sphere, respectively. 
Assume that $E$ and $E'$ are the sets of misclassified points on the inner and outer sphere respectively. Without loss of generality we may assume that $\mu(E) \geq \delta/2$ ($\mu$ is the measure corresponding to $\mathcal{D}$). Notice that for all $\e$ we have $AR(g,\e) \geq \mu(E + B_{\e})$. Moreover, the isoperimetric inequality for spheres states that among all sets of measure $\delta/2$ the one that minimizes $\mu(E + B_{\e})$ is a spherical cap of this volume, see \cite{adversarialSpheres}. Let's call this cap $C$. Now observe that $\mu(C + B_{\e}) \approx \frac{\delta}{2}(1+\e)^d$. This means that $\e_{opt} \leq O \left(\frac{\log(\delta/\eta)}{d} \right)$. 

Combining lower and upper bounds we get that:
\begin{equation}\label{eq:compratio}
\frac{\epsilon_{opt} }{ \epsilon_{alg} } \leq O \left( \frac{ \log(\eta/\delta) }{ \eta - 2\delta } \right) \text{.}
\end{equation}

That is, our method achieves the target adversarial risk but for perturbations that are $O \left( \frac{ \log(\eta/\delta) }{ \eta - 2\delta } \right)$  smaller than the optimum. For example in a regime where $\log(\delta/ \eta)$ remains smaller than a constant we get a Markov-style tradeoff between the target adversarial risk $\eta$ and the optimality of $\e$.

It was shown in \cite{adversarialSpheres} that neural networks trained on concentric spheres dataset achieve very small risk. When one of the trained networks was evaluated on $20$ million samples no errors were observed. This means that $R(f)$ for the base classifier $f$ might be really small for this dataset. If for the target adversarial risk we have $\eta >> R(f)$ then the bound \eqref{eq:compratio} might not be satisfactory. It is an interesting research direction to analyze the regime where $\eta >> R(f)$.

\subsection{Intersecting circles}

Let $u_1,u_2$ be a pair of orthonormal vectors in $\mathbb{R}^d$. Let $C_{-1}, C_{+1} \subseteq \mathbb{R}^d$ be two circles in the $2$-dimensional subspace spanned by $u_1,u_2$ of radius $1$ centered at $0$ and $u_1$ respectively. The data distribution is defined as follows: we generate $x \sim U(C_{-1} \cup C_{+1})$ and we associate with each $x$ a label $y$ such that $y = -1$ if $x \in C_{-1}$ and $y = +1$ otherwise.

Note that for $\e \leq 1/10$, $S(\e) = \Theta(\e)$. This is true since in order to $\e$-separate the classes we need to remove the points close to the two intersection points. Note that $\text{supp}(\mathcal{D})$ is a union of two $1$-dimensional manifolds whose second fundamental form is bounded by $\Theta(1)$   (Theorem~\ref{thm:manifold} also works in this case). Hence, using Theorem~\ref{thm:manifold} for all $\e < 1/10$ and $\alpha > 0$:
$$AR(g,\alpha \e) \leq O(AR(\e) + R(f) + \alpha)\text{.}$$
This means that if $\alpha$ is a small constant and $R(f)$ is small then $g$ is only a constant times (plus an additive error) worse than the optimal classifier for adversarial perturbations which are only $\alpha$ times smaller. Note that the final guarantee does not depend on the dimension of the ambient space but only on the dimension of the manifolds themselves, which in this case is $1$.

\section{OPEN PROBLEMS \& RESEARCH DIRECTIONS}

One important open problem is to consider improvements of Theorem~\ref{thm:balls}. In this theorem the guaranteed robustness radius degrades with the dimensionality $d$ of the space. One might hope to get a better dependence on $d$. In some regimes however it might be hard to achieve an improvement as discussed in Subsection~\ref{subsec:advspheres} (see competitive guarantee \eqref{eq:compratio}).

It is also interesting to analyze different threat models. Imagine that we want the classifier to be robust against an \textbf{oblivious} adversary, that is an adversary that has access to $f$ and the algorithm's code but does not know the randomness used by the algorithm. In Appendix~\ref{sec:oblivious} we show that in this model it's possible to achieve the bound 
$$AR(g,\epsilon) \leq 2S \left(\frac{ \bf{\sqrt{d}} \cdot \e}{\alpha} \right) + 2R(f) + O(\alpha) \text{.}$$
Note that the main difference compared to Theorem~\ref{thm:balls} is that we have the factor $\bf{\sqrt{d}}$ instead of $\bf{d}$. Intuitively this means that we are be able to get the same adversarial risk for perturbations that are $\sqrt{d}$ bigger.


Another research direction is to improve the running time of the algorithms so they become more practical, especially the ones using Ball carving partition. These methods suffer from the high dimension of the ambient space $\mathbb{R}^d$, but as discussed in Subsection C.0.2 
there might be hope to improve the runtime per query to $2^{O(dd(\supp(\mathcal{D}),\e))}$. This would be a significant improvement for low-dimensional data distributions. 

Finally we can look at \textit{randomized smoothing} and the algorithm presented in this paper as two ends of a spectrum. The former is fast but doesn't guarantee good adversarial risk. The latter is slower but produces a robust classifier. One might hope to find a smooth tradeoff between the runtime and the adversarial risk guarantee.

\bibliography{sample_paper.bib}

\begin{appendix}
\section{Hard distribution for randomized smoothing described in \eqref{eq:restrictedsmoothing}}
\label{sec:harddistribution}

Consider the following data distribution. For $\e$ that will be fixed later, let $S_{\e}(0) \subseteq \mathbb{R}^d$ be a sphere of radius $\e$ around $0$ and $N \subseteq S_{\e}(0)$ be a set of cardinality $e^{0.118d}$ such that for all $x, y \in N, x \neq y$ we have $||x-y||_2 \geq 1.2\e$. One can show that such a set exists using bounds for the surface area of spherical caps in high dimension (see \cite{Blum15foundationsof}). 

Let the binary classification task be as follows. Let the distribution $\mathcal{D}_{+1}$ for class $+1$  be such that $\text{supp}(\mathcal{D}_{+1}) = (N \cup \{0\}) + B_{0.01\e}$ (where the $+$ denotes the Minkowski sum). The density function on $B_{0.01\e}(0)$ is $e^{0.108d}$ times larger than the one on $B_{0.01\e}(u)$ for every $u \in N$. Now let $\mathcal{D}_{-1}$ be such that $\text{supp}(\mathcal{D}_{-1}) \cap \text{supp}(\mathcal{D}_{+1}) = \emptyset$ and each class has probability $1/2$. Now assume that the points that classifier $f$ misclassifies are exactly points in $B_{0.01\e}(0)$. Then the standard error of $f$ is at most $e^{-0.01d}$. Now let $\e := \sqrt{(d\sigma)/10}$. One can verify that when $g$ is computed according to \eqref{eq:regularsmoothing} then for all $x \in N + B_{0.01\e}$ we have $g(x) = -1$, which means that $g$ misclassifies all points from $N + B_{0.01\e}$. So the standard error of $g$ is at least $25\%$. This means that the error of $g$ is $e^{\Theta(d)}$ times larger than the error of $f$!

\begin{remark}
One might argue that this example was crafted artificially and that in the ``real world'' we can choose $\sigma$ depending on the data. However it is possible to construct examples such that for {\em any} reasonable choice of $\sigma$ a dynamic similar to the one presented above occurs. The idea is to put a collection of the above configurations at different scales and far from each other.
\end{remark}
\section{Generalization of definitions to nonseparable learning tasks}
\label{sec:generalizationofdefs}

\begin{customdef}{2a}
For a binary classification task and a classifier $f : \mathbb{R}^d \xrightarrow{} \{-1, 1\}$ we define \textbf{R}isk as
$$R(f) := \int p_X(x) \sum_{y \in \{-1, 1\}} \mathbb{P}_{Y|X}(y|x) \mathbbm{1}_{\{f(x)=y\}} dx \text{.}$$
\end{customdef}

\begin{customdef}{3a}
For a binary classification task, a classifier $f : \mathbb{R}^d \xrightarrow{} \{-1, 1\}$, and $\epsilon  \geq 0$ we define \textbf{A}dversarial \textbf{R}isk as
$$AR(f,\epsilon) := \int p_X(x) g(f, x, \epsilon) dx \text{,}$$
where
$$g(f,x, \epsilon) := 
\begin{cases}
\mathbb{P}_{Y| X}(-1 \mid x), & B_{\epsilon}(x) \subseteq M_1(f), \\
\mathbb{P}_{Y|X}(1 \mid x), & B_{\epsilon}(x) \subseteq M_{-1}(f), \\
1, & \text{otherwise}, \\
\end{cases}$$
where $M_{y} = f^{-1}(\{y\})$, $y \in \{-1, 1\}$.
We also introduce the notation:
$$AR(\epsilon) := \inf_{f} AR(f,\epsilon) \text{,}$$
to denote the optimal classification error for that classification task with a given $\epsilon$.

Note that this definition assumes that the adversary, apart from $x$, has also access to the label $y$. In other words, we prove bounds with respect to a strong adversary.
\end{customdef}

\begin{customdef}{4a}[\textbf{Separation function}]
For a binary classification task we define the separation function $S(\epsilon)$ as follows:
$$S(\epsilon) := \!\!\!\!\!\!\!\!\!\!\!\!\!\!\!\!\inf_{\stackrel{E_{-1}, E_1 \subseteq \mathbb{R}^d}{ d(\mathbb{R}^d \setminus E_{-1}, \mathbb{R}^d \setminus E_1) \geq \epsilon}} \sum_{y \in \{-1, 1\}} \int_{x \in E_y} p_X(x) \mathbb{P}_{Y | X} (y \mid x) dx  \text{.}$$
For a given $\epsilon>0$ this function returns the minimum probability mass that needs to be removed so that the classes are separated by an $\epsilon$-margin.
\end{customdef}

\begin{restatable}{lemma}{perfectclassifiernonsep}
\label{lem:perfectclassifiernonsep}
For all  binary classification tasks and all $\epsilon \geq 0$ we have that:
$$AR(\epsilon) = S(2\epsilon) \text{.}$$
\end{restatable}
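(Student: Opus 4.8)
The plan is to mirror the two‑inequality argument used for the separable case (Lemma~\ref{lem:bestclassifier}), but now carrying the conditional label probabilities $\mathbb{P}_{Y|X}(\cdot\mid x)$ through every computation and exploiting $\mathbb{P}_{Y|X}(-1\mid x)+\mathbb{P}_{Y|X}(1\mid x)=1$.

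First I would prove $AR(\epsilon)\geq S(2\epsilon)$. Fix any classifier $f$ and set $B_{-1}:=\{x: B_\epsilon(x)\subseteq M_{-1}(f)\}$ and $B_1:=\{x: B_\epsilon(x)\subseteq M_1(f)\}$, the points \emph{robustly} classified $-1$, resp.\ $+1$. These sets are disjoint, and in fact $d(B_{-1},B_1)\geq 2\epsilon$: if $x\in B_{-1}$, $x'\in B_1$ and $d(x,x')<2\epsilon$, then $B_\epsilon(x)\cap B_\epsilon(x')\neq\emptyset$, impossible since one ball lies in $M_{-1}(f)$ and the other in $M_1(f)$. Put $E_{-1}:=\mathbb{R}^d\setminus B_{-1}$ and $E_1:=\mathbb{R}^d\setminus B_1$; then $d(\mathbb{R}^d\setminus E_{-1},\mathbb{R}^d\setminus E_1)=d(B_{-1},B_1)\geq 2\epsilon$, so $(E_{-1},E_1)$ is feasible for the infimum defining $S(2\epsilon)$. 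Splitting $\mathbb{R}^d$ into $B_{-1}$, $B_1$, and the non‑robust set $N:=\mathbb{R}^d\setminus(B_{-1}\cup B_1)$, and using $\mathbb{P}_{Y|X}(-1\mid x)+\mathbb{P}_{Y|X}(1\mid x)=1$ on $N$, a direct computation gives
$$\int_{E_{-1}} p_X(x)\,\mathbb{P}_{Y|X}(-1\mid x)\,dx + \int_{E_1} p_X(x)\,\mathbb{P}_{Y|X}(1\mid x)\,dx = \int_{\mathbb{R}^d} p_X(x)\,g(f,x,\epsilon)\,dx = AR(f,\epsilon),$$
so $S(2\epsilon)\leq AR(f,\epsilon)$; taking the infimum over $f$ yields $S(2\epsilon)\leq AR(\epsilon)$.

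For the reverse inequality $AR(\epsilon)\leq S(2\epsilon)$, I would fix $\delta>0$ and choose $E_{-1},E_1$ feasible for $S(2\epsilon)$ with cost at most $S(2\epsilon)+\delta$. Put $A_{-1}:=\mathbb{R}^d\setminus E_{-1}$ and $A_1:=\mathbb{R}^d\setminus E_1$; feasibility gives $d(A_{-1},A_1)\geq 2\epsilon$, so these sets are disjoint, $A_{-1}\subseteq E_1$, and $A_1\subseteq E_{-1}$. Define $f(z):=-1$ if $d(z,A_{-1})\leq\epsilon$ and $f(z):=+1$ otherwise (with $f\equiv+1$ if $A_{-1}=\emptyset$). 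Since $B_\epsilon$ is the open ball, one checks that $B_\epsilon(x)\subseteq M_{-1}(f)$ for every $x\in A_{-1}$, and $B_\epsilon(x)\subseteq M_1(f)$ for every $x\in A_1$ (here using $d(x,A_{-1})\geq 2\epsilon$ when $x\in A_1$). Hence $g(f,x,\epsilon)=\mathbb{P}_{Y|X}(1\mid x)$ on $A_{-1}$, $g(f,x,\epsilon)=\mathbb{P}_{Y|X}(-1\mid x)$ on $A_1$, and $g(f,x,\epsilon)\leq 1$ on $E_{-1}\cap E_1=\mathbb{R}^d\setminus(A_{-1}\cup A_1)$; integrating and comparing term by term with the cost of $(E_{-1},E_1)$ gives $AR(\epsilon)\leq AR(f,\epsilon)\leq S(2\epsilon)+\delta$. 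Letting $\delta\to 0$ finishes the argument, and together with the first part we get $AR(\epsilon)=S(2\epsilon)$.

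I expect no serious obstacle here; the argument is a routine adaptation of Lemma~\ref{lem:bestclassifier}. The only points needing care are (i) bookkeeping the three regions — robustly $-1$, robustly $+1$, non‑robust — and seeing that $\mathbb{P}_{Y|X}(-1\mid x)+\mathbb{P}_{Y|X}(1\mid x)=1$ makes the $S$‑cost equal $AR(f,\epsilon)$ exactly on the non‑robust part; (ii) handling open versus closed balls correctly in the distance estimates used to verify the inclusions when constructing $f$; and (iii) the infimum in $S$ need not be attained, which is why the second direction is carried out up to an arbitrary $\delta>0$. Measurability of the sets involved is assumed throughout, as elsewhere in the paper.
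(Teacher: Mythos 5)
Your proof is correct and follows essentially the same route as the paper: the same choice of $E_{-1},E_1$ (complements of the robust regions) for the lower bound, the same $\epsilon$-dilation construction of $f$ for the upper bound, and the same region-splitting using $\mathbb{P}_{Y|X}(-1\mid x)+\mathbb{P}_{Y|X}(1\mid x)=1$. The only genuine refinement is that you work with $\delta$-near-optimal $(E_{-1},E_1)$ rather than assuming, as the paper tacitly does, that the infimum defining $S(2\epsilon)$ is attained.
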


\begin{proof}
First we prove that $AR(\epsilon) \leq S(2\epsilon)$. Let $E_{-1}$ and $E_1$ be the minimizer sets from the definition of $S(2\epsilon)$.  Let $f(x) := -1$ if $d(x, \mathbb{R}^d \setminus E_{-1}) \leq \epsilon$ and $f(x) := 1$ otherwise. Then observe that for all $x \in (\mathbb{R}^d \setminus E_{-1})$, $B_{\epsilon}(x) \subseteq M_{-1}(f)$ and for all $x \in (\mathbb{R}^d \setminus E_1)$, $B_{\epsilon}(x) \subseteq M_1(f)$. Hence $AR(\epsilon) \leq S(2\epsilon)$.

Now we prove that $AR(\epsilon) \geq S(2\epsilon)$. Let $f$ be a classifier with $AR(f,\epsilon) = r$. Let $E_{-1}$ be the set of all points $x \in \mathbb{R}^d$ so that $B_{\epsilon}(x) \not \subseteq M_{-1}(f)$ and let 
$E_1$ be the set of all points $x \in \mathbb{R}^d$ so that $B_{\epsilon}(x) \not \subseteq M_1(f)$. It follows that
\begin{align}
d(\mathbb{R}^d \setminus E_{-1}, \mathbb{R}^d \setminus E_1) \geq 2\epsilon. \label{equ:distance}
\end{align}
But now note that for this choice of sets $E_{-1}$ and $E_1$, 
$$\sum_{y \in \{-1, 1\}} \int_{x \in E_y} p_X(x) \mathbb{P}_{Y | X} (y \mid x) dx=r=AR(f,\epsilon).$$
Hence, for $S(2\epsilon)$, being defined as the infimum over all choices of sets $E_{-1}$ and $E_1$ which fulfill (\ref{equ:distance}), we have
$S(2\epsilon) \leq r = AR(f,\epsilon)$.
\end{proof}
\section{Running time discussion}
\label{sec:runningtime}

Let us now analyze the running times of Algorithm~\ref{alg:framework} as a function of the used partition as well as the method of estimating $g$. In the stated bounds we will assume that each evaluation of $f$ takes time $t$.  

\subsubsection{Cube partition}

\textbf{Scheme B:} First let's analyze the performance of Cube partitions together with assumption \eqref{eq:approxtog}. To evaluate $\hat{g}(x)$ we need to locate a cube to which $x$ belongs to and smooth $f$ over that cube. Smoothing is approximated by a sample mean and as argued before $O(\log(Q))$ samples suffice. So in the end the running time per query is $O(t \cdot \log(Q))$. If we store (using hashing techniques) previous function evaluations then the query time drops to $O(1)$ for queries from cubes that were already queried before.

\textbf{Scheme A:} If we use \eqref{eq:approxtogdata} instead of \eqref{eq:approxtog} then we first perform a preprocessing step in which we sample a set $U$ of unlabeled samples of size (\ref{equ:numofsamples}). Then using standard hashing techniques we can create a data structure of size (\ref{equ:numofsamples}) that for a point $x \in \mathbb{R}^d$ will provide access to $U \cap \pi(x)$ in $O(1)$ time per accessed element. Having that query time is $O(t \cdot \log(Q))$ because, as argued before, for each cube it's enough to consider only that many samples to compute a good estimator. Similarly as in the previous case for repeated queries time drops to $O(1)$.

\subsubsection{Ball carving partition}\label{subsec:runtimeballs}

\textbf{Scheme B:} Now let's analyze Ball carving partitions with assumption \eqref{eq:approxtog}. The situation here is much more complicated and the implementation is much more involved. To compute $g$ we need access to an $\e/4$-net $N$ that covers $\supp(\mathcal{D})$. We create $N$ on the fly. I.e., we start with $N = \emptyset$ and when a query $q \in \supp(\mathcal{D})$ arrives then if $q \not\in \bigcup_{u \in N} B_{\e/4}(u)$ we add $q$ to $N$. Whenever we add a vertex to $N$ we sample a new permutation $\sigma$ on $N$, which corresponds to a new partition. This means that when a point is added to $N$ then $g$ can change. But once the construction process stabilizes then $g$ remains fixed. Using Chebyshev's inequality one can verify that if for $O(1/R(f))$ consecutive queries we don't add new vertices to $N$ then $\bigcup_{u \in N} B_{\e/4}(u)$ contains $1 - O(R(f))$ probability mass of $\mathcal{D}$ with probability $1 - R(f)$. When this event occurs we can stop changing $N$ as the probability mass not covered by $N$ is $O(R(f))$ with high probability. Finally observe that:
\begin{align*}
|N| 
&\leq \max_{\stackrel{N' \subseteq \supp(\mathcal{D}) :}{ N' \text{ is } \e/4-net}} |N'| \\
&\leq \min_{\stackrel{N' \subseteq \supp(\mathcal{D}) :}{ N' \text{eq is } \e/8-net}} |N'| =: Q_{max} \text{.}
\end{align*}

Now let's analyze the running time. Consider a query $q \in \mathbb{R}^d$. To compute $g(q)$ we must first check if $q$ should be added to $N$ and this can be done in $O(|N|)$ time. Then we choose a random permutation and locate the set $\pi(q)$ to which $q$ belongs (also in $O(|N|)$ time).


After locating $u \in N$ such that $q \in B_{R}(u) \setminus  \bigcup_{w: \sigma(w) < \sigma(u)} B_{R}(w) = \pi(q)$ we need to sample points uniformly at random from $\pi(q)$ to compute sample mean to estimate $g(q)$. One way to do that is to use Hit-and-Run sampling. To generate a uniformly random point from $\pi(q)$ we generate a sequence $\{x_i\} \subseteq \pi(q)$ according to the following rule: 
\begin{itemize}
    \item $x_0 = q$,
    \item to generate $x_{i+1}$ from $x_i$ we first pick a random direction $v$. We find minimal and maximal values such that $x_i + \theta \cdot v \in \pi(q)$. We pick $\theta^*$ uniformly from the interval $[\theta_{\min}, \theta_{\max}]$ and we set $x_{i+1} := x_i + \theta^* \cdot v$.
\end{itemize}
After generating some number of points we declare the last point as a point drawn from $U(\pi(q))$. The time needed to generate one sample is $k \cdot O(|N|)$, where $k$ is the number of iterations we perform. 

To get an algorithm with a theoretical guarantee on the running time for sampling points one can resort to an algorithm from \cite{approxpolytopevolume}. That algorithm implicitly, in polynomial in $d$ time, samples a point uniformly at random from a convex body. It is possible to adapt the algorithm to the case of non-convex bodies (as our set $\pi(q)$ is not necessarily convex). We can think that $\pi(q)$ is "close" to being convex as it is defined by a carving process with balls of equal radii. Recall from previous discussion that it's enough to have $O(\log(Q))$ samples per set. So in the end if we use this algorithm then the running time for computing $g(p)$ will be $O\left(\poly(d) \cdot \log(Q) \cdot |N| + t \log(Q)\right) = O\left(\poly(d) \cdot Q_{\max} \log(Q_{\max}) + t \log(Q_{\max})\right)$.

\textbf{Scheme A:} If we use \eqref{eq:approxtogdata} instead of \eqref{eq:approxtog} then we first perform a preprocessing step in which we sample a set $U$ of unlabeled samples of size \eqref{equ:numofsamples} (with $Q$ set to $Q_{\max}$). Then we use a greedy algorithm to find a maximal subset $N \subseteq U$ such that for every $u,w\in N, u \neq w$ we have $||u-w||_2 \geq \e/4$. Using Chebyshev's inequality one can argue that with high probability $\bigcup_{u \in N} B_{\e/4}(u)$ contains $1 - O(R(f))$ mass of $\mathcal{D}$. We then perform the ball carving partition using $N$. Then in time $\widetilde{O}(Q_{\max}^2)$ we create a data structure of size \eqref{equ:numofsamples} that for a point $u \in N$ will provide access to $U \cap (B_{R}(u) \setminus  \bigcup_{w: \sigma(w) < \sigma(u)} B_{R}(w))$ in $O(1)$ time per accessed element. Then for a query $q$ we need to first locate $u \in N$ such that $q \in \pi(u)$, which takes $O(Q_{\max})$ time and then we compute sample mean in $O(t \cdot \log(Q_{\max}))$ time. So in the end the running time per query is $O(t \cdot \log(Q_{\max}))$. If there is a repeated query for the same set then we can answer it in $O(Q_{\max})$ time.

The $O(Q_{\max})$ factor in both approaches is far from perfect. However there might be hope to decreasing this factor to $2^{O(dd(\text{supp}(M),\e))}$ using locality sensitive hashing techniques (\cite{indyklsh}) as in principle we only need to check points in the neighborhood of $q$ to determine $\pi(q)$ and in this neighborhood we have only $2^{O(dd(\text{supp}(M),\e))}$ of them. It might also be possible to reduce the running time further which might be an interesting research direction.

\begin{remark}
Assume that the data is supported on a lower dimensional manifold of dimension $d'$ and satisfies the assumptions from Theorem~\ref{thm:manifold}. Then robustness guarantees of our algorithms improve automatically with $d'$. That is we don't need to provide $d'$ as the input to our algorithms. 
\end{remark}
\section{Omitted proofs}
\label{sec:proofs}

\subsection{Proofs of Section~\ref{sec:preliminaries}}

\bestclassifier*

\begin{proof}
First we prove that $AR(\epsilon) \leq S(2\epsilon)$. Let $E$ be the minimizer set from the definition of $S(2\epsilon)$.  Let $f(x) := -1$ if $d(x, M_{-} \setminus E) \leq \epsilon$ and $f(x) := +1$ otherwise. Then observe that for all $x \in (M_{-} \setminus E) \cup (M_{+} \setminus E)$ there does not exist an $\eta$ so that $f(x+\eta) \neq h(x)$. Hence $AR(\epsilon) \leq S(2\epsilon)$.

Now we prove that $AR(\epsilon) \geq S(2\epsilon)$. Let $f$ be a classifier with $AR(f,\epsilon) = r$. That means that there exists $A \subseteq \mathbb{R}^d$ such that 
\begin{itemize}
    \item $\mathbb{P}_X(X \in A) \geq 1 - r$,
    \item for all $x \in A$ we have $\forall \ \eta \in B_{\epsilon} \ f(x+\eta) = h(x)$.
\end{itemize}
This means that $\mathbb{R}^d \setminus A$ is a $2\epsilon$-separator for that binary task, so in turn $S(2\epsilon) \leq r = AR(f,\epsilon)$.
\end{proof}


\subsection{Proofs of Section~\ref{sec:doublingdimension}}

\doublingdimofrd*

\begin{proof}
Let $B_{\e}(0) \subseteq \mathbb{R}^d$ be a ball of radius $\e$ for some $\e > 0$. Let $N$ be an $\e/2$-net of $B_{\e}(0)$. Notice that all balls in $\{ B_{\e/4}(u) : u \in N\}$ are pairwise disjoint and that $\bigcup_{u \in N} B_{\e/4}(u) \subseteq B_{5\e/4}(0)$. Hence $|N| \leq \frac{\text{vol}(B_{5\e/4})}{\text{vol}(B_{e/4})} = 5^d$.
\end{proof}

\notmanypoints*

\begin{proof}
As $t \leq \e$ we can use the definition of $\e$-doubling dimension and get that $B_t(x)$ can be covered with $2^{dd}$ balls of radius $t/2$. Iterating that argument, we conclude that $B_t(x)$ can be covered by $2^{dd \lceil \log \frac{2t}{r} \rceil}$ balls of radius $r/2$. But every such ball can contain at most one point from $N$ so $|B_t(x) \cap N|$ is also upper bounded by $2^{dd \lceil \log \frac{2t}{r} \rceil}$.
\end{proof}

\subsection{Proofs of Section~\ref{sec:randompartitions}}

\propertyofpadded*

\begin{proof}
\begin{align*}
&\mathbb{E}_{\Pi \sim \mathcal{P}}[\mathbb{P}_{X \sim \mathcal{D}}[B_{\e/\beta}(X) \not\subseteq \Pi(X)]] \\
& =\mathbb{E}_{\Pi \sim \mathcal{P}}[\mathbb{E}_{X \sim \mathcal{D}}[\mathbbm{1}_{\{B_{\e/\beta}(X) \not\subseteq \Pi(X) \}}]] \\
& =\mathbb{E}_{X \sim \mathcal{D}}[\mathbb{E}_{\Pi \sim \mathcal{P}}[\mathbbm{1}_{\{B_{\e/\beta}(X) \not\subseteq \Pi(X) \}}]] \\
&= \mathbb{E}_{X \sim \mathcal{D}} \left[\mathbb{P}_{\Pi \sim {\mathcal P}}[B_{\e/\beta}(X) \not\subseteq \Pi(X)] \right]  \leq \delta.
\end{align*}
\end{proof}

\cubepartispadded*

\begin{proof}
For all $x \in \mathbb{R}^d$, $\text{diam}(\Pi(x)) = \epsilon$ by construction. Let $A=\left[0,\frac{\e}{\sqrt{d}}\right]^d$. This is the set of {\em all} points of one fundamental cube. Let $G = \left[\frac{\e}{\beta}, \frac{\e}{\sqrt{d}} - \frac{\e}{\beta} \right]^d$ and note that $d \left(G,\mathbb{R}^d \setminus A \right) = \frac{\e}{\beta}$. $G$ represents the set of all {\em good} points inside $A$, in the sense that if we center a sphere of radius $\epsilon/\beta$ at one of those points the whole sphere stays contained inside $A$.  Now observe that 
\begin{equation}\label{eq:problwrbnd}
\frac{\text{vol}(G)}{ \text{vol} \left(A \right)} = \left(1 - \frac{2\sqrt{d}}{\beta} \right)^d \geq 1 - \frac{2 \cdot d^{1.5}}{\beta} \text{.}
\end{equation}
Let $v$ be the shift that generates the partition $\pi$. Consider the set $I(v) := \bigcup_{z \in v + \frac{\e}{\sqrt{d}}\cdot \mathbb{Z}^d} (G + z)$. Using \eqref{eq:problwrbnd}, we conclude by noting that for every $x \in \mathbb{R}^d$
$$\mathbb{P}_{\Pi \sim {\mathcal P}}[B_{\frac{\e}{\beta}}(x) \not\subseteq \Pi(x)] \leq \mathbb{P}_{V \sim U \left(A \right)}[x \not\in I(V)] \leq \frac{2 d^{\frac{3}{2}}}{\beta} \text{.}$$
\end{proof}

\subsection{Proofs of Section~\ref{sec:partitions}}

\boundriskofg*

\begin{proof}
Let us first prove the weaker bound $R(g) \leq 3S(\e) + 2R(f) \text{.}$
Let $E$ be the minimizer set from the definition of $S(\epsilon)$ and $M_{-} = h^{-1}(\{-1\}),M_{+} = h^{-1}(\{1\})$. Then we know that $d(M_{-} \setminus E, M_+ \setminus E) \geq \epsilon$ and $\mathbb{P}_{X \sim \mathcal{D}}(X \in E) \leq S(\epsilon)$. Let $Q \subseteq M_- \cup M_+$ be the set of missclassified points of $f$ in $M_- \cup M_+$. Observe that
\begin{align*}
R(g) 
& \leq S(\epsilon) + \sum_{u \in N, \hat{\Pi}(u) \cap M_- \neq \emptyset, g(\hat{\Pi}(u)) = +1} \mu(\hat{\Pi}(u))  \\
& \phantom{\leq S(\epsilon) \;} + \sum_{u \in N, \hat{\Pi}(u) \cap M_+ \neq \emptyset, g(\hat{\Pi}(u)) = -1} \mu(\hat{\Pi}(u)) \\
&\leq S(\epsilon) + \sum_{\stackrel{u \in N, \hat{\Pi}(u) \cap M_- \neq \emptyset,}{ g(\hat{\Pi}(u)) = +1}} 2\mu(\hat{\Pi}(u) \cap (Q \cup E)) \\
& \phantom{\leq S(\epsilon) \;} + \sum_{\stackrel{u \in N, \hat{\Pi}(u) \cap M_+ \neq \emptyset,}{ g(\hat{\Pi}(u)) = -1}} 2\mu(\hat{\Pi}(u) \cap (Q \cup E)) \\
&\leq S(\epsilon) + 2(\mu(Q) + \mu(E)) \\
&\leq 3 S(\epsilon) + 2R(f) \text{.}
\end{align*}
To see that the claimed stronger bound is valid note the following. Every point in $E$ will appear either in exactly one of the two sums or it will be counted by the term $S(E)$. In the first two cases it is weighted by a factor $2$ and in the second case it is weighted by a factor $1$. This gives rise to the term $3 S(E)$. But no point of $E$ appears in both of those cases. We can therefore tighten this term to $2 S(E)$.
\end{proof}

\combineeverything*

\begin{proof}
We will prove that Algorithm~\ref{alg:framework} invoked with $f$ and $\Pi \sim \mathcal{P}$ satisfies the statement of the Lemma. By Fact~\ref{fact:basicsaboutrisks}
\begin{equation}\label{eq:advriskbnd}
AR(g,\e) \!\leq \!R(g) \!+\! \mathbb{P}_{X \sim \mathcal{D}}[g \text{ $\neg$ constant on } B_\e(X)].
\end{equation}
By Lemma~\ref{lem:boundriskofg} we have:
\begin{equation}\label{eq:upperbndforrisk}
R(g) \leq 2S(\e\beta) + 2R(f) \text{.}
\end{equation}
Moreover, by Corollary~\ref{cor:propertyofpadded} we have that: 
\begin{equation}\label{eq:pointsclosetobndr}
\mathbb{E}_{\Pi \sim \mathcal{P}}[\mathbb{P}_{X \sim \mathcal{D}}[B_{\e}(X) \not\subseteq \Pi(X)]] \leq \delta.
\end{equation}
But we also know from the definition of $g$ that
\begin{align}
&\mathbb{P}_{X \sim \mathcal{D}}[g \text{ is not constant on } B_\e(X)] \leq  \nonumber \\ &\mathbb{P}_{X \sim \mathcal{D}}[B_{\e}(X) \not\subseteq \Pi(X)] \text{.} \label{eq:whenfunctionchanges}
\end{align}
Combining \eqref{eq:advriskbnd},\eqref{eq:upperbndforrisk},\eqref{eq:pointsclosetobndr} and \eqref{eq:whenfunctionchanges} we get that in expectation over the random choices of the algorithm
\begin{align*}
AR(g,\e) &\leq 2S(\e\beta) + 2R(f) + \delta \\
&= \frac{2S(\e\beta)}{S(2\e)} AR(\e) + 2R(f) + \delta \text{,}
\end{align*}
where in the last equality we used Lemma~\ref{lem:bestclassifier}. Note that the last inequality is only valid if $AR(\e) > 0$.
\end{proof}
\section{Oblivious adversary}\label{sec:oblivious}

Let's consider the model where the adversary has full knowledge of the base classifier $f$ and the code of the algorithm $ALG$ that produces $g$ but doesn't have access to random bits used by $ALG$. Then the following is true:

\begin{theorem}\label{thm:lipshitzpart}
For every separable binary classification task in $\R^d$ and for every $\e \in \R_+$ there exists a randomized algorithm $ALG$ that, given black-box access to $f : \R^d \xrightarrow{} \{-1,1\}$, provides query access to a function $g : \R^d  \xrightarrow{} \{-1,1\}$ such that:
\begin{itemize}
    \item $R(g) \leq 2 S(\e) + 2 R(f)$,
    \item For every $x,x' \in \R^d$ we have that:
    $$\mathbb{P}_{ALG}[g(x) \neq g(x')] \leq O\left(\frac{\|x-x'\|_2 \cdot \sqrt{d}}{\e}\right) \text{.}$$
\end{itemize}
\end{theorem}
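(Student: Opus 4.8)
# Proof Proposal for Theorem~\ref{thm:lipshitzpart}

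\textbf{Overall approach.} The plan is to replace the hard (deterministic) partition used in Algorithm~\ref{alg:framework} with a \emph{soft} random assignment whose cell boundaries are ``blurred'' rather than sharp, so that two nearby points $x, x'$ land in the same cell with probability close to $1$, with a Lipschitz-type dependence on $\|x - x'\|_2$. A clean way to get this is to keep the ball-carving construction of Definition~\ref{def:ballspart} (net $N$ of scale $\e/4$, random radius $R \in (\e/4, \e/2]$, random permutation $\sigma$), but \emph{additionally randomize the radius per ball continuously}, or equivalently to argue directly that for fixed $x, x'$ the event $\{\Pi(x) \neq \Pi(x')\}$ forces some carving ball $B_R(u)$ to separate $x$ from $x'$, and bound the probability of this. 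The first bullet, $R(g) \le 2S(\e) + 2R(f)$, is immediate: the ball-carving partition is $\e$-bounded (diameter $\le \e$, since $R \le \e/2$), so Lemma~\ref{lem:boundriskofg} applies verbatim, and this holds for \emph{every} realization of the randomness, hence in particular after we fix the code of $ALG$.

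\textbf{Key steps for the Lipschitz bound.} First I would fix $x, x' \in \R^d$ and write $g(x) \neq g(x')$ only if $\pi(x) \neq \pi(x')$ (since $g$ is constant on each cell). Second, $\pi(x) \neq \pi(x')$ implies that some ball $B_R(u)$, $u \in N$, ``cuts'' the segment in the sense that it contains exactly one of $x, x'$ among the first cell to claim them — more precisely, letting $u$ be the $\sigma$-minimal net point with $x \in B_R(u)$ and $u'$ the $\sigma$-minimal one with $x' \in B_R(u')$, having $u \neq u'$ forces some net point $w$ with $|\,d(w,x) - d(w,x')\,| \le \|x-x'\|_2$ to have $R$ fall in an interval of length $\le \|x-x'\|_2$ around $\min(d(w,x), d(w,x'))$. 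Third, I would count the relevant net points: only $w \in N$ with $d(w,x) \le \e/2 + \|x-x'\|_2$ can be involved, and by Lemma~\ref{lem:notmanypoints} (with doubling dimension bounded via Fact~\ref{fact:ddofrd}, giving $dd(\R^d,\e) = O(d)$) there are at most $2^{O(d)}$ such points — wait, that is too many. The fix is the same averaging-over-$\sigma$ argument as in Lemma~\ref{lem:ballcarvingispadded}: order the candidate net points $w_1, \dots, w_m$ by distance from $x$, note $\mathbb{P}[R \in I_k] \le \frac{4\|x-x'\|_2}{\e}$, and $\mathbb{P}[w_k \text{ is the decisive one} \mid R \in I_k] \le 1/k$, so the union bound telescopes to $\frac{4\|x-x'\|_2}{\e}(1 + \ln m)$ with $m \le 6^{dd(\R^d,\e)} = 2^{O(d)}$, yielding $\ln m = O(d)$ and hence the bound $O\!\big(\frac{\|x-x'\|_2 \cdot d}{\e}\big)$.

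\textbf{Getting $\sqrt{d}$ instead of $d$.} The statement claims $\sqrt{d}$, not $d$, so the naive ball-carving bound above is off by a factor $\sqrt{d}$. Here I would instead use the \emph{Cube partition} of Definition~\ref{def:cubespart} with width $\e/\sqrt{d}$ and only a \emph{random shift} $v$ (no per-cell permutation needed). For fixed $x, x'$ with $\|x - x'\|_2 = t$, the two points lie in different cells of the shifted lattice only if, along some coordinate axis, the shifted lattice hyperplane falls between their coordinates; summing over the $d$ coordinates, each contributing probability $\frac{|x_i - x_i'|}{\e/\sqrt{d}}$, gives $\mathbb{P}[\pi(x)\neq\pi(x')] \le \frac{\sqrt{d}}{\e}\sum_i |x_i - x_i'| \le \frac{\sqrt{d}}{\e}\cdot\sqrt{d}\cdot\|x-x'\|_2$, which is again $O(d)$ — still not $\sqrt{d}$. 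To actually reach $\sqrt{d}$ one must replace the axis-aligned cube lattice by a partition whose boundary has small surface area in every direction simultaneously (e.g.\ a randomly rotated and shifted lattice, or the exponential-cone / ``Cannonball'' partitions of Charikar--Makarychev-type constructions), for which a segment of length $t$ crosses a boundary with probability $O(t\sqrt{d}/\e)$ uniformly over directions; the diameter stays $\Theta(\e)$ so the risk bound via Lemma~\ref{lem:boundriskofg} is unaffected.

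\textbf{Main obstacle.} The routine part is the risk bound (bullet one) and the general ``boundary-crossing implies label change'' reduction. \emph{The hard part is obtaining the sharp $\sqrt{d}$ dependence rather than $d$}: this requires a partition of $\R^d$ that is simultaneously $\Theta(\e)$-bounded and has the property that a unit segment crosses its (random) boundary with probability $O(\sqrt{d}/\e)$ in the \emph{worst-case} direction — ordinary axis-aligned cubes and ball-carving each lose a $\sqrt{d}$ factor, so the proof hinges on invoking (or adapting) an isotropic random lattice / randomized-rounding partition whose expected surface-to-volume ratio is $O(\sqrt{d}/\e)$, and verifying that its boundary-crossing probability for a segment is controlled by the length times this ratio. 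I would structure the proof as: (i) state the needed random partition and its two properties ($\e$-bounded; $O(\sqrt d/\e)$ per-unit-length crossing probability); (ii) derive the risk bound from Lemma~\ref{lem:boundriskofg}; (iii) derive the second bullet from the crossing-probability property plus $g(x)\neq g(x') \Rightarrow \Pi(x)\neq\Pi(x')$; (iv) note that fixing $ALG$'s code and hiding only its random bits is exactly the ``in expectation / probability over $ALG$'' regime in which both properties were stated.
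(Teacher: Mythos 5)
Your proposal takes the same route as the paper: plug an $\e$-bounded, $(\e, O(\sqrt{d}))$-Lipschitz random partition into Algorithm~\ref{alg:framework}, get the risk bullet from Lemma~\ref{lem:boundriskofg} (which, as you say, applies per-realization since every realization is $\e$-bounded), and get the Lipschitz bullet from the corresponding property of the partition plus the observation that $g(x)\neq g(x')$ forces $\Pi(x)\neq\Pi(x')$. The paper's actual ``proof'' is one sentence: it simply cites a known construction (\cite{sqrtdpartitions}) of such a partition and asserts that ``one can verify'' the two bullets — so in that sense your write-up is more careful than the paper's.

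Your real contribution, and the place where the two treatments diverge, is your diagnosis of why the two partitions already developed in the paper fall short. You correctly compute that both ball carving (via the $\ln m = O(d)$ bound in Lemma~\ref{lem:ballcarvingispadded}) and the axis-aligned cube lattice (via $\|x-x'\|_1 \le \sqrt{d}\,\|x-x'\|_2$ against side $\e/\sqrt{d}$) lose a factor $\sqrt{d}$, so neither can be used as-is. The paper glosses over this entirely. Where you run out of road is in naming the replacement: ``a randomly rotated and shifted lattice'' does \emph{not} close the gap — after rotation the expected $\ell_1$ norm of a unit vector is still $\Theta(\sqrt{d})$, so the crossing probability with side $\e/\sqrt{d}$ is still $\Theta(d\,t/\e)$; randomness over rotations buys a constant at best, not a $\sqrt{d}$ factor. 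Your other suggestion, a Charikar–Makarychev–style isotropic carving, is the right family: the cited construction is indeed, per the paper's own Remark, ``very similar to the random partition from Definition~\ref{def:ballspart} as it also performs a version of ball carving,'' i.e., a ball-carving process with more carefully randomized radii that removes the $\log(6^{dd})$ penalty and lands at $O(\sqrt{d})$. So your step~(iv) structure is correct, your step~(i) is almost right but needs the specific CCGGP-type exponential ball carving rather than a rotated lattice, and everything else in your outline matches what the paper does.
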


\begin{proof}
The proof of this theorem is an adaptation of a random partition technique from \cite{sqrtdpartitions}. This paper presents an algorithm that creates a random partition that is $(\e,O(\sqrt{d}))-Lipshitz$ (a notion similar to \textit{padded} partitions), that is a random partition that is \textit{$\e$-bounded} and for every $x,x' \in \R^d$:
$$\mathbb{P}[\Pi(x) \neq \Pi(x')] \leq O\left(\frac{\|x-x'\|_2 \cdot \sqrt{d}}{\e}\right) \text{.} $$ Using this partition $ALG$ creates $g$ using the framework from Algorithm~\ref{alg:framework}. One can verify that this $g$ satisfies the statements of the theorem.
\end{proof}

\begin{remark}
We note that the Algorithm from \cite{sqrtdpartitions} is very similar to the random partition from Definition~\ref{def:ballspart} as it also performs a version of ball carving. Based on this similarity, it is tempting to conjecture that the ball carving partition  from Definition~\ref{def:ballspart} is $\left(\e,O(\sqrt{d})\right)-Lipshitz$ also. We leave this as an interesting open question. Moreover, we note that the Algorithm from \cite{sqrtdpartitions} can be easily adapted to any $\ell_p$ norm achieving $\left(\e, O(d^{1/2p})\right)-Lipshitz$ partition for $1 \leq p \leq 2$ and $\left(\e, O(d^{1-1/p})\right)-Lipshitz$ partition for $p > 2$. This means that using this technique one can get adversarial robustness guarantees for any $\ell_p$ norm for $p \geq 1$. 
\end{remark}

Now observe that Theorem~\ref{thm:lipshitzpart} gives us an algorithm $\mathcal{A}$ that is robust against any oblivious adversary. The algorithm works as follows: for a series of queries $x'_1,x'_2, \dots \in \R^d$ ($x'_i$'s are inputs crafted by the adversary), for every $i$, $\mathcal{A}$ using $ALG$ from Theorem~\ref{thm:lipshitzpart}, recomputes a new $g_i$ to answer query $x'_i$. We know that $R(g_i) \leq 2S(\e) + 2R(f)$ and moreover for every $x,x'$ we have $\mathbb{P}_{ALG}[g_i(x) \neq g_i(x')] \leq O\left(\frac{\|x-x'\|_2 \cdot \sqrt{d}}{\e}\right)$. This means that no matter what the strategy of the adversary is (this strategy might depend on $g_1(x'_1), \dots, g_{i-1}(x'_{i-1})$) the probability that the adversary will be able to construct two points such that $\|x_i - x'_i \|_2 \leq t$ and $g_i(x_i) \neq g_i(x'_i)$ is upper bounded by $O\left(\frac{t \cdot \sqrt{d}}{\e}\right)$. 

We summarize: For every $i$, if $X_i \sim \mathcal{D}$ at the $i$-th step and the adversary creates $X_i'$ such that $\| X_i - X_i' \|_2 \leq \e$ then for every $\alpha$:
\begin{align*}
&\mathbb{P}_{X_i,\mathcal{A}} (g_i(X_i') \neq h(X_i)) \leq \\ &2S \left(\frac{\sqrt{d} \cdot \e}{\alpha} \right) + 2R(f) + O(\alpha)\text{.}
\end{align*}
Observe the connection to Definition~\ref{def:adversarialrisk} which we restate here for convenience: 
$$AR(f,\epsilon) := \mathbb{P}_X (\exists \ \eta \in B_\epsilon \ f(X + \eta) \neq h(X)) \text{.}$$
The reason that we were able to gain a factor $\sqrt{d}$ in comparison to Theorem~\ref{thm:balls} is that we didn't need to ensure that a function is constant on a ball $B(x,\e)$. It was enough to show that it is constant for every fixed pair of nearby points as the adversary can only test one point at a time. 

This gain comes at a cost as we need to recompute the partition after every query. If one recomputes the partition every $k$ queries then by the union bound the guarantee changes to: 
\begin{align*}
&\mathbb{P}_{X_i,\mathcal{A}} (g_i(X_i') \neq h(X_i)) \leq \\ &2S \left(\frac{\sqrt{d} \cdot k \cdot \e}{\alpha} \right) + 2R(f) + O(\alpha)\text{.}
\end{align*}

\end{appendix}

\end{document}